\newcommand{\eps}{\varepsilon}
\renewcommand{\phi}{\varphi}
\theoremstyle{plain}
\newtheorem{theorem}{Theorem}[section]
\theoremstyle{definition}
\theoremstyle{remark}
\newtheorem{remark}[theorem]{Remark}
\newcommand{\BE}{\begin{equation}}
\newcommand{\EE}{\end{equation}}
\newcommand{\BEN}{\begin{equation*}}
\newcommand{\EEN}{\end{equation*}}
\newcommand{\BAL}{\begin{align}}
\newcommand{\EAL}{\end{align}}
\newcommand{\BAN}{\begin{align*}}
\begin{document}

\title{Mathematical analysis of singularities in the diffusion model under the submanifold assumption}

\author[1]{Yubin Lu\thanks{These authors contributed equally to this work.}}
\author[2]{Zhongjian Wang\thanks{These authors contributed equally to this work.}\thanks{Corresponding author: zhongjian.wang@ntu.edu.sg}}
\author[3]{Guillaume Bal}

\affil[1]{Department of Applied Mathematics, Illinois Institute of Technology}
\affil[2]{Division of Mathematical Sciences, School of Physical and Mathematical Sciences, Nanyang Technological University}
\affil[3]{Department of Mathematics, Statistics and CCAM, University of Chicago}

\renewcommand*{\Affilfont}{\small\it}
\renewcommand\Authands{ and }
\date{\today}
\maketitle

\begin{abstract}
This paper concerns the mathematical analyses of the diffusion model in machine learning. The drift term of the backward sampling process is represented as a conditional expectation involving the data distribution and the forward diffusion. The training process aims to find such a drift function by minimizing the mean-squared residue related to the conditional expectation.  Using small-time approximations of the Green's function of the forward diffusion, we show that the analytical mean drift function in DDPM and the score function in SGM asymptotically blow up in the final stages of the sampling process for singular data distributions such as those concentrated on lower-dimensional manifolds, and are therefore difficult to approximate by a network. To overcome this difficulty, we derive a new target function and associated loss, which remains bounded even for singular data distributions. We validate the theoretical findings with several numerical examples.
\end{abstract}




\section{Introduction}

The field of generative models has emerged as a powerful tool for building continuous probabilistic models that generate new samples from given discrete datasets. Furthermore, by accounting for the joint distribution of observable (condition, query) and target variables, these models offer a flexible and efficient way to generate samples based on queries.
Such generative models have been applied across a wide range of disciplines, including computer vision \cite{image}, speech signal processing \cite{speech}, natural language processing \cite{NLP}, and natural sciences \cite{natrualSci}. Recent advances in generative models, including the popular variational autoencoder (VAE) \cite{VAE}, generative adversarial network (GAN) \cite{GAN}, flow-based model \cite{KRmap}, and DeepParticle model \cite{DeepParticle}, have demonstrated their ability to solve diverse problems across different domains. These models share a common feature: { they directly model a push-forward map from easy-to-sample distribution to an unknown distribution driven by data through neural networks (including the composition of invertible functions).}

In contrast to these direct constructions, another type of generative model links distributions through one-parameter continuous deformations ({parameter $t\in[0,T]$ or $[0,\infty)$  in the following up various models)}. This approach has a long history in the mathematical literature. {A straightforward example is the Langevin Monte Carlo (LMC), where we start from arbitrary distribution solving the following SDE,
\begin{align}\label{FK-SDE}
dX_t=v(t,X_t)dt+\sqrt{2D(t,X_t)}dW_t,
\end{align}
where $(v,D)=(-\nabla \log p_{data}, I)$. From arbitrary initial distribution $X_0\sim p_0$, LMC \eqref{FK-SDE} solves until $t$ is sufficiently large. Then the distribution of $X_t$ continuously deforms from arbitrary distribution $p_0$ to $p_{data}$ as $t\to\infty$.

Without analytic expression to $p_{data}$, a number of constructive data-driven approaches to $(v,D)$ have been considered in the literature, e.g. \cite{song2019generative,block2020generative,wang2021deep,NeuralODE,NFs}. Once a model of $(v,D)$ is derived either analytically or from data, the stochastic differential equation (SDE) integrator can be used to numerically solve the SDE from initial to terminal time (some $T>0$ or $\infty$), thereby interpreting it as a generative model.

Among these constructions, the diffusion models have attracted huge attention due to their well-known performance in practical applications.} Inspired by non-equilibrium thermodynamics \cite{Sohl-Dickstein}, \cite{DDPM} proposed denoising diffusion probabilistic models (DDPMs), a class of latent variable models, as an early diffusion model. Later, \cite{SGM} unified several earlier models through the lens of stochastic differential equations and proposed score-based generative models (SGMs). The backward process (generation of new samples) can be interpreted as solving Eq.~\eqref{FK-SDE} with a \textit{tweak} that reverts the notation of time and the initial distribution after the tweak is assumed to be a standard normal distribution \cite{anderson1982reverse}. \cite{survey1} and \cite{survey2} provided literature reviews from different perspectives.

Despite its success, the sampling process for diffusion models is extremely slow and the computational cost is high. In DDPMs \cite{DDPM}, for instance, 1000 steps are typically needed to generate samples. Several works have attempted to accelerate the sampling process \cite{DPMsolver1, DPMsolver2, zhang2022fast,DistillDiffModel,jolicoeur-martineau2021gotta,zhang2023gddim,Distillation1,Distillation2}. In addition, \cite{zhang2022fast} pointed out that there were dramatically different performances in terms of discretization error and training error when they trained the score function of SGM on different datasets.
{On theoretical side, several recent works have shown the convergence bound for diffusion models, for instance \cite{bortoli2021diffusion,lee2022convergence,schen2023sampling,hchen2023improved}. It is then natural to seek justifications for the gap between practical costly performance and theoretical convergence guarantee.
}

{In fact, the aforementioned theoretical results require the score function to be approximated well by a neural network, in the $L^2$ or $L^\infty$ sense. While in practice, when dealing with singular data distributions (for example, distributions supported on a lower-dimensional manifold), the assumptions on well-approximated score may not be tractable. To this end, our current work aims to provide a mathematical characterization of the score and adaptions of the diffusion models based on the mathematical analysis.}

\textbf{Main Contributions of this work:} (1) We rigorously characterize the singularity of the score function when the target distribution is supported on an embedded submanifold.
This justifies the singularities observed in numerical experiments in \cite{zhang2022fast}, and mentioned in \cite{schen2023sampling,hchen2023improved,lee2022convergence,bortoli2022convergence,chen2023score,nachmani2021non}.
(2) We propose a parameterization of score function {(conditional expectation model, CEM)} bypasses the problem of modeling a singular function and significantly improves the efficiency of the training process. (3) {Based on the asymptotic analysis of the singularities, we configure the proposed parameterization (CEM), over  the loss weighting during training and the sampling schedule.}

\noindent\textbf{Discussion of Related Work.}
{
It is worth mentioning \cite{pidstrigach2022scorebased} which established a lower bound that satisfies $E\|\nabla\log(p_t)\|\gtrsim\frac{1}{\sqrt{t}}$ under similar manifold assumption. The expectation is taken with the measure defined by the forward process.  Our theory provides a larger asymptotic which scales as $\frac{1}{t}$. The difference is due to our analysis is in the strong sense, which is more practical when applying deterministic neural networks to approximate scores and is aligned with the universal approximation theorems. Hence our theories has the potential to be extended to further analysis.}
{We are also aware of a parallel work \cite{chen2023score} which provided a similar optimal lower bound $\frac{1}{t}$ while assuming the support of data is on a linear subspace. In comparison, our work is more general and practical as we only require a locally defined low-dimensional representation of distribution.}

{In the context of bypassing singularities, \cite{bortoli2022convergence} and \cite{zhang2022fast} explored the effect of singular data distributions, and showed the convergence in Wasserstein distance given a locally (for $t>0$) well-approximated score. The estimate is based on the early stopping technique, namely, sampling at $t=\delta$ with $\delta>0$ small enough so that $p_{\delta}$ approximates $ p_0$ in Wasserstein distance. Our bounds validate the potential asymptotic scale in the assumption of score approximation in \cite{bortoli2022convergence}. Furthermore our characterization may potentially be extended to an existence result for the correct approximation of score. Then equipped with the early stopping technique in \cite{bortoli2022convergence}, one can show the convergence of diffusion models with more general and easier to validate assumptions, which we will leave as a future direction.}

{From the experimental perspective of diffusion models, DDPM and SGM conducted different weighting functions for training the neural network to estimate the score function. \cite{Karras,DistillDiffModel,heitz2023iterative} and \cite{campbell2022a} proposed similar loss functions or training objectives for improving the performance of diffusion models, especially at the stage $t\to0$.  Despite similar configurations of models, our characterization of score function as small $t$, provides a solid theoretical justifications for these configurations.}

\section{Background}\label{sec:background}
As a class of probabilistic generative models, diffusion models are used to sample from a target probability distribution.  In order to build a foundational framework for discussion, we briefly review two existing diffusion models and their connections in this section.

Given a $d$-dimensional target distribution $p_{data}$ and a random variable $X_0\sim p_{data}$, the general idea of diffusion models is to add noise to $X_0$ step by step such that $X_T$ is an easy-to-sample standard normal distribution. Subsequently, a reverse diffusion process is used to sample from $p_{data}$.
The generative task of diffusion models is done by solving an SDE for the form \eqref{FK-SDE} defined backward in time from $T$ to $0$. In general, there is no closed-form expression for the reversion and we usually learn it from available data.

\subsection{DDPM}\label{DDPM}
As a main class of diffusion models, DDPMs \cite{DDPM} learn a distribution $p_{\theta}$ that approximates $p_{data}$ as follows.  We start with the forward process, denoted by $X_k$ and $X_0\sim p_{data}$.
We gradually add Gaussian noise to the data with a schedule of $K$ steps at $\beta_1,\ldots,\beta_K$:
\begin{align}\label{DDPM_forward}
p(X_{1:K}|X_0):=\prod\limits_{k=1}^{K}p(X_k|X_{k-1})
\end{align}
and
\begin{align}
    p(X_k|X_{k-1}):=\mathcal{N}(\sqrt{1-\beta_k}X_{k-1},\beta_k I_d).
\end{align}
Denoting $\alpha_k:=1-\beta_k$ and $\bar{\alpha}_k:=\prod\limits_{s=1}^{k}\alpha_s$, we may recast it as:
\begin{align}
    X_{k+1} &= \sqrt{1-\beta_k}X_k+\sqrt{\beta_k}\epsilon,\nonumber\\
    &= \sqrt{\alpha}_kX_k+\sqrt{1-\alpha}_k\epsilon, \quad\epsilon\sim\mathcal{N}(0,I_d).
\end{align}
A notable property of the forward diffusion process is that
\begin{equation}
    p(X_k|X_0)=\mathcal{N}(\sqrt{\bar{\alpha}_k}X_0,(1-\bar{\alpha}_k)I_d),
\end{equation}
which implies that the data is converted to a standard Gaussian distribution as $\bar{\alpha}_k$ converges to $0$.

For the generative sampling task, we construct a backward process, denoted by $\widetilde{X_{0:K}}$, such that $\forall k$, $\widetilde{X_k}$ shares the same marginal distribution as $X_k$. As a starting point, $\widetilde{X_K}$ follows $\mathcal{N}(0,I_d)$, which is easy to sample. Then we iteratively find the conditional distribution
\begin{align}
p_{\theta}(\widetilde{X_{k-1}}|\widetilde{X_k}):=\mathcal{N}(\widetilde{X_{k-1}};\mu_{\theta}(\widetilde{X_k},k),\Sigma_{\theta}(\widetilde{X_k},k)),
\end{align}
 where $(\mu_{\theta},\Sigma_{\theta})$ can be learned from data evolving according to the forward diffusion process \eqref{DDPM_forward}.

\cite{DDPM} proposed $\Sigma_{\theta}(\widetilde{X_k},k)=\beta_k I_d$ and
\begin{equation}
    \mu_{\theta}(\widetilde{X_k},k)=\frac{1}{\sqrt{\alpha_k}}\big( \widetilde{X_k}-\frac{\beta_k}{\sqrt{1-\bar{\alpha}_k}}\epsilon_{\theta}(\widetilde{X_k},k)\big),
\end{equation}
where $\epsilon_{\theta}$ is modeled by a neural network. We then obtain samples from the distribution $p_\theta(\widetilde{X_{k-1}}|\widetilde{X_k})$ by computing
\begin{equation}
    \widetilde{X_{k-1}}=\frac{1}{\sqrt{\alpha_k}}\big( \widetilde{X_k}-\frac{\beta_k}{\sqrt{1-\bar{\alpha}_k}}\epsilon_{\theta}(\widetilde{X_k},k)\big)+\sqrt{\beta_k}\widetilde{N_k},
\end{equation}
where $\widetilde{N_k}\sim\mathcal{N}(0,I_d)$.

For the reverse diffusion process, \cite{DDPM} proposed finding the best trainable parameters $\theta$ by optimizing the variational lower bound:
\begin{equation}
    L:=E_p[-\log \frac{p_{\theta}(X_{0:K})}{p(X_{1:K}|X_0)}].
\end{equation}
They also found a simplified loss function, which improves sample quality:
\begin{equation}\label{Loss_simple}
    L_{simple}(\theta):=E_{k,X_0,\epsilon} \|\epsilon-\epsilon_{\theta}(\sqrt{\bar{\alpha}_k}X_0+\sqrt{1-\bar{\alpha}_k}\epsilon,k)\|^2.
\end{equation}
\subsection{Score-based generative model (SGM)}\label{SGM}
DDPM may be viewed as an SGM inferred from  discretizations of stochastic differential equations (SDEs) \cite{SGM}. The general idea of an SGM is to transform a data distribution into a known base distribution by means of an SDE, while the reverse-time SDE is used to transform the base distribution back to the data distribution. The forward process can be written as follows:
\begin{equation}\label{score-based_forward}
    dX_t=h(X_t,t)dt+g(t)dW_t,\qquad X_0\sim p_{data},
\end{equation}
where $W_t$ is a Brownian motion and $X_T\sim p_T$ approximates the standard normal distribution for large value of time $T$. The corresponding reverse-time SDE $\widetilde{X}_t$ shares the same marginal distribution as the forward process $X_t$ and hence gives a pair of $(v,D)$ in  Eq.~\eqref{FK-SDE} modulo the reversing of the direction of time. It can be written as
\begin{equation}\label{score-based_backward}
    d\widetilde{X_t}=\big[h(\widetilde{X_t},t)-g(t)^2\nabla_{\widetilde{X_t}}\log p_t(\widetilde{X_t})\big]dt+g(t)d\widetilde{W_t},
\end{equation}
where $p_t$ is the solution of the Fokker-Planck equation for the forward SDE \eqref{score-based_forward}, see \cite{anderson1982reverse}. \cite{SGM} proposed learning the {\em score} function $\nabla_{X}\log p_t(X)$ by minimizing the score-matching loss function:
\begin{equation}\label{score_loss}
    E_{t,X_0,X_t}\Big[ \lambda(t)\| S_{\theta}(X_t,t)-\nabla_{X_t}\log p_t(X_t|X_0)\|^2\Big],
\end{equation}
where $S_{\theta}(X_t,t)$ is a time-dependent score-based model, and $\lambda(t)$ is a positive weighting function.

\subsection{Training and sampling issues for diffusion models}
Diffusion models consist of two processes: a forward process and a reverse-time process. The forward process is used to transform the target distribution into a normal distribution. The forward process is explicitly given and does not require training. In contrast, the reverse-time process is used to restore the target distribution from the normal distribution, with coefficients that are not explicitly known and may be approximated by training.

{Training of DDPM seeks $\theta$ by minimizing Eq.~\eqref{Loss_simple}. For SGM, the evaluation of the score function $\nabla_{X}\log p_t(X)$ is not explicit. \cite{SGM} discuss several approximations including Gaussian transition kernel $p_t(X_t|X_0)$ or using sliced score matching. In the next section, we provide an expression \eqref{global_training} as the loss function arising from the continuous solution of the Fokker-Planck equation.}

In addition to the issue of accessibility of $\nabla_{X}\log p_t(X)$, the score function $\nabla_{X}\log p_t(X)$ in Eq.~\eqref{score_loss} may be complex and exhibit local structures and singularities, in particular near $t=0$.
This was pointed out by \cite{SGM2,SGM,zhang2022fast}, where they proposed experimental ways to deal with the singularity of the score function $\nabla_{X}\log p_t(X)$.

In the next section, we will elaborate on accessibility and regularity issues from a more mathematical point of view.

\section{ A mathematical perspective on generative diffusion models}
In this section, we first present the unified framework for DDPM and SGM based on considering the backward-generating process as an SDE solver. %
Then by exploring the analytical solution of the Fokker-Planck equation, we represent the score function as a natural conditional expectation. {
We are aware existed literature on the framework linking diffusion models and SDEs  \cite{DMandNF,SGM,SGM2} and the conditional expectation representation \cite{TweedieFormula,pidstrigach2022scorebased} which is known as Tweedie's formula. While these derivations, especially \eqref{global_S}, provide a solid background and clear intuition for the following up Section \ref{sec:singularities} which rigorously shows the existence of singularities and Section \ref{sec:newScore} which presents a parametrization, different from DDPM and SGM. }

\subsection{Unifying the time framework in DDPM and SGM}\label{sec:framework}
Inspired by SGM, the forward process of diffusion models may be viewed as a discretization of the following $d-$dimensional OU process:
\begin{align}\label{forward_process}
    dX_t=-\frac{1}{2}X_tdt+dW_t.
\end{align}
Consider a $K$ partition of the time interval $[0,T]$, $t_0=0<t_1<t_2< \cdots<t_K=T$, where $t_{k+1}-t_{k}=\Delta t_k=-\log(1-\beta_k)$ (equivalently $\beta_k=1-e^{-\Delta t_k}$). Then at time step $t_k$,
\begin{align}
    X_{t_{k+1}}&=X_{t_{k}} e^{-\frac{\Delta t_k}{2}}+\int_{t_k}^{t_{k+1}}e^{-\frac{t_{k+1}-s}{2}}dW_s\nonumber\\
        &\sim X_{t_{k}} \sqrt{1-\beta_k} +\sqrt{\beta_k} N_k, \quad N_k\sim\mathcal{N}(0,I_d),
\end{align}
which coincides with the forward diffusion process in the diffusion models literature, e.g., \cite{DDPM, SGM}. Compared to the preceding section, note that $\alpha_k=e^{-\Delta t_k}$, $\bar{\alpha}_k=e^{-t_k}$.

The advantage of the continuous model is twofold. First, we can represent the distribution of Eq.~\eqref{forward_process} at time $t$ by Eq.~\eqref{dist_forward_process}
\begin{align}\label{dist_forward_process}
X_t\sim X_0e^{-\frac{t}{2}}+\sqrt{1-e^{-t}}N,\quad \text{where}\quad N\sim\mathcal{N}(0,I_d).
\end{align}
Second, we can easily estimate the time necessary to convert the real data distribution to normal distribution. Empirically, we take the final time $T>10$ which leads to the fraction of the initial data in $X_T$ to be less than $\exp(-5)=0.0067$.

The backward (sampling) process follows the reverse-time SDE \cite{anderson1982reverse},
\begin{align}\label{backward_process}
    d\widetilde{X_t}=-(\frac{1}{2}\widetilde{X_t}+\nabla_X\log p(\widetilde{X_t},t))dt+d\widetilde{W}_t,
\end{align}
where $p_t$ is a forward Kolmogorov equation of Eq.~\eqref{forward_process} with initial data distribution $p_{data}$ and $\widetilde{W}_t$ is a backward Brownian motion independent of $W_t$. Then $X_t$ and $\widetilde{X_t}$ share the same marginal distribution.

A key observation is that $p$, the law of the OU process, has an analytical solution \cite{evans2010partial},
\begin{align}\label{marginal_density}
    p(X,t)=\frac{1}{Z} \int\exp\Big(-\frac{\|X-X_0 e^{-\frac{t}{2}}\|^2}{2(1-e^{-t})}\Big)p_{data}(X_0)dX_0,
\end{align}
where $Z$ is a normalizing factor that depends on $t$ and $d$. The global score function $S$ may then be interpreted as a conditional expectation, namely,
\begin{align}
    &S(X,t)=-\nabla_X\log p(X,t)=-\frac{\nabla_X p(X,t)}{p(X,t)}\nonumber\\
    =&\frac{\frac{1}{Z} \int\frac{X-X_0e^{-t/2}}{1-e^{-t}}\exp\Big(-\frac{\|X-X_0 e^{-t/2}\|^2}{2(1-e^{-t})}\Big)p_{data}(X_0)dX_0,}{\frac{1}{Z} \int\exp\Big(-\frac{\|X-X_0 e^{-t/2}\|^2}{2(1-e^{-t})}\Big)p_{data}(X_0)dX_0}\nonumber\\
    =&E_{{X_0|X_t}}[\frac{X_t-X_0 e^{-t/2}}{1-e^{-t}}|X_t=X],\label{global_S}
\end{align}
where $X_t$ follows the forward process \eqref{forward_process}. The conditional expectation in \eqref{global_S} can be interpreted as follows. Starting from $X_0$ following $p_{data}$, the forward process solving \eqref{forward_process} gives the base distribution $X_t$. Given the observation of $X_t$ as $X$, $\frac{X_t-X_0 e^{-t/2}}{1-e^{-t}}$ follows a posterior distribution. Taking the expectation of the posterior gives the analytical expression of $S$.

\begin{remark}
    The score function defined in our paper, $S(X,t)=-\nabla_X\log p(X,t)$, differs slightly from the convectional form, $\nabla_X\log p(X,t)$. In other words, the sign of the score function depends on the direction of the schedule.
\end{remark}
\begin{remark}
By standard Markov property, for $t'<t$,
\begin{align}\label{global_S1}
    S(X,t)=E_{{X_{t'}|X_t}}\Big[\frac{X_t-X_{t'} e^{-(t-{t'})/2}}{1-e^{-(t-{t'})}}|X_t=X\Big].
\end{align}
\end{remark}

\noindent\textbf{Training.}
In general, $S(x,t):\mathbb{R}^d\times [0,T]\to \mathbb{R}^d$ is a very high dimensional function and hence lacks global approximation. Leveraging the properties of the conditional expectation for fixed $t$, $S(X,t)$ is the optimizer of the following mean-squared prediction error functional (justification see Section \ref{app:training-goal}),
\begin{align}
J(S)=E_{X_0,X_t}\Big\|\frac{X_t-X_0 e^{-t/2}}{1-e^{-t}}-{S}(X_t,t)\Big\|^2. \label{training-goal}
\end{align}
By assigning a weight for the $t$-variable, the training process of \cite{SGM} is generalized as seeking a network-represented function $S_\theta$ that minimizes,
\begin{align}
   &E_{X_0,N\sim \mathcal{N}(0,I_d),t}\nonumber\\
   &\Big[\lambda(t) \big\|\frac{N}{\sqrt{1-e^{-t}}}-S_\theta(X_0e^{-t/2}+\sqrt{1-e^{-t}}N,t)\big\|^2\Big]. \label{global_training}
\end{align}
We remark that we use samples of $(X_0,t,N)$ for the evaluation of the integral in Eq.~\eqref{global_training} and samples of $t$ do not necessarily follow the same schedule as those of the backward process.

\noindent\textbf{Sampling.} There is no general closed-form  solution for the backward process \eqref{backward_process} and so we employ splitting schemes,
\begin{equation}\label{global_sampling}
\begin{cases}
\overline{X_{t_{k+1}}}=\widetilde{X_{t_{k+1}}}-\Delta t_{k}S_\theta(\widetilde{X_{t_{k+1}}},t_{k+1})\\
    \widetilde{X_{t_k}}=e^{\Delta t_k/2}\overline{X_{t_{k+1}}}+\sqrt{1-e^{-\Delta t_k}}\widetilde{N}_k
    \end{cases}
\end{equation}
where $\widetilde{N}_k\sim\mathcal{N}(0,I_d)$.
\begin{remark}
    The training and sampling process exactly coincide with the aforementioned SGM, i.e., learning the score function $S(x,t)$ with $L^2$-norm. It is also related to DDPM as follows. Taking $\epsilon_\theta(x,t)=\sqrt{1-e^{-t}}S_\theta(x,t)$, the loss function \eqref{global_training} becomes,
\begin{align}
  &\int_t \frac{\lambda(t)}{1-e^{-t}}E_{X_0,N\sim \mathcal{N}(0,I_d)} \Big\|N-\epsilon_\theta(X_0e^{-t/2}+\sqrt{1-e^{-t}}N,t)\Big\|^2dt\nonumber\\
  \approx &\sum_k \frac{\lambda(t_k)}{1-\bar{\alpha}_k} E_{X_0,N\sim \mathcal{N}(0,I_d)} \Big\|N-\epsilon_\theta(X_0\sqrt{\bar{\alpha}_k}+\sqrt{1-\bar{\alpha}_k}N,t_k)\Big\|^2.
  \end{align}
And the sampling process \eqref{global_sampling},
\begin{align}
    \widetilde{X_{t_k}}=&e^{\Delta t_k/2}(\widetilde{X_{t_{k+1}}}-\Delta t_{k}S_\theta(\widetilde{X_{t_{k+1}}},t_{k+1}))+\sqrt{1-e^{-\Delta t_k}}\widetilde{N}_k\\
     =&\frac{1}{\sqrt{\alpha_k}}(\widetilde{X_{t_{k+1}}}-\frac{\Delta t_{k}}{\sqrt{1-\bar{\alpha}_k}}\epsilon_\theta(\widetilde{X_{t_{k+1}}},t_{k+1}))+\sqrt{\beta_k}\widetilde{N}_k\nonumber\\
     \approx&\frac{1}{\sqrt{\alpha_k}}(\widetilde{X_{t_{k+1}}}-\frac{1-\alpha_k}{\sqrt{1-\bar{\alpha}_k}}\epsilon_\theta(\widetilde{X_{t_{k+1}}},t_{k+1}))+\sqrt{\beta_k}\widetilde{N}_k.
\end{align}
For the approximation in the last line, we use $1-\alpha_k=\beta_k\approx \Delta{t_k}$. This scheme coincides with DDPM using $L_{simple}$ loss function \eqref{Loss_simple}.
\end{remark}

\subsection{Singularity of the score function}\label{sec:singularities}
In the previous subsection, we showed the conventional training process aimed to approximate the conditional expectation function $S(X,t)=E_{{X_0|X_t}}[\frac{X_t-X_0e^{-t/2}}{1-e^{-t}}|X_t=X]$ or $\epsilon(X,t)=\sqrt{1-e^{-t}}S(X,t)$. However, such functions potentially exhibit singularities near $t=0$, which corresponds to the last few steps of the sampling process. For example, if $X_0$ follows a single point distribution, then $S(X,t)=\frac{X-X_0e^{-t/2}}{1-e^{-t}}$ while $\epsilon(X,t)=\frac{X-X_0e^{-t/2}}{\sqrt{1-e^{-t}}}$. {It requires a large number of latent variables in a common network to model such a blow-up as $t\to0$.}

An $n$-dimensional sub-manifold is denoted by $\Omega$, where $\Omega\subset\mathbb{R}^d$ and $n<d$. To characterize such asymptotics for most general datasets, we made the following assumptions over point $X$ in the backward (sampling) process and data distribution $p_{data}$.

\noindent\textbf{(H1) Uniqueness Assumption} Fixing point $X$, we denote the $y_X$ on $\Omega$ as the unique point that minimize the distance between $X$ and $\Omega$, i.e. $y_X=\mathrm{argmin}_{y\in\Omega}\|y-X\|$ is uniquely defined.
\begin{remark}
    We note that the uniqueness assumption (H1) could potentially be relaxed. However, we use this assumption here to provide a clearer proof of Theorem \ref{Thm1} in order to avoid a lengthy discussion about the classification of boundaries of data manifold.
\end{remark}

\noindent\textbf{(H2) Subspace Assumption}
Let $B_{\varepsilon}=\{y\in \Omega: \|y-X\|<\|y_X-X\| + \varepsilon\}$, which is decreasing set series as $\varepsilon\to 0$. We assume there exists $0<\varepsilon_0\ll1$, such that for $y\in B_{\varepsilon_0}$, there exists a local coordinate chart, $z\to y(z)\in B_{\varepsilon_0}\subset\Omega$, under which $p_{data}$ is assumed to have a locally defined smooth density function in form of,
\begin{align}\label{localpdata}
    p_{data}(y)=\hat{\rho}(z)|J(z)|\delta_{y(z)\in\Omega},
\end{align}
where $J$ is the Jacobian of local coordinate transformation, and the size of $J$ is corresponding to the dimension of low-dimensional variable $z$, denoted as $n$. {$\delta_{{y(z)\in\Omega}}$ denotes a $\delta$-type function supported on $n$ dimensional subspace and vanishes when $y$ is away from $\Omega$.}\footnote{{Another explanation of Eq.~\eqref{localpdata} is $\forall y\in \Omega$, within neighbourhood of $y$, $ p_{data}$ is absolutely continuous with respect to ($n$-dimensional) Lebesgue measure on $\Omega$.}}
In addition, we assume within $y(z)\in B_{\varepsilon_0}$, $\hat{\rho}(z)$ is continuous and bounded,
\begin{align}\label{localboundc}
    0<\rho_0\leq\hat{\rho}(z)|J(z)|\leq \rho_1 <\infty.
\end{align}
Under these assumptions, we state the first key theorem of this work as follows.
\begin{theorem}\label{Thm1}(Singularity of the score functions)
Let $X\in\mathbb{R}^d\backslash\Omega$ and data distribution $p_{data}$ satisfy (H1) and (H2). Then, the score function $S(X,t)$ blows up as $t\to0$, and more precisely, satisfies
\begin{align}\label{Slimit}
S(X,t) = \frac{X-y_X}{t} \big(1+o(1)\big).
\end{align}
\end{theorem}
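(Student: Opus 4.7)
The plan is to apply Laplace's method to the integrals in the representation (\ref{global_S}),
\begin{equation*}
S(X,t) = \frac{\displaystyle\int \frac{X - X_0 e^{-t/2}}{1-e^{-t}} \exp\!\Big(-\frac{\|X - X_0 e^{-t/2}\|^2}{2(1-e^{-t})}\Big) p_{data}(X_0)\, dX_0}{\displaystyle\int \exp\!\Big(-\frac{\|X - X_0 e^{-t/2}\|^2}{2(1-e^{-t})}\Big) p_{data}(X_0)\, dX_0},
\end{equation*}
exploiting that as $t\to 0$ the Gaussian factor combined with $p_{data}$ being supported on $\Omega$ concentrates the mass at the unique nearest point $y_X$ guaranteed by (H1). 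Using $1-e^{-t}=t(1+O(t))$ and $e^{-t/2}=1+O(t)$, the exponent reduces to $\|X-X_0\|^2/(2t)$ up to multiplicative factors $1+O(t)$, and (H2) rewrites the integrals in the local chart $z\mapsto y(z)$, $y(0)=y_X$, weighted by $\hat{\rho}(z)|J(z)|$. The contribution from outside $B_{\varepsilon_0}$ is dominated by $e^{-\|X-y_X\|^2/(2t)}e^{-c/t}$ with $c>0$ proportional to $\|X-y_X\|\varepsilon_0$, using the defining gap in (H2), and is exponentially subdominant to the local contribution.

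The geometric core of the argument is that $X-y_X$ is normal to $T_{y_X}\Omega$: the minimality of $y_X$ gives $\langle X-y_X,\, y'(0)z\rangle = 0$ for every tangent direction $y'(0)z$, hence
\begin{equation*}
\|X - y(z)\|^2 = \|X-y_X\|^2 + z^\top H z + O(|z|^3),
\end{equation*}
with $H$ the (assumed positive-definite) Hessian at $z=0$, i.e., $X$ is not a focal point of $\Omega$. Under the Laplace rescaling $z=\sqrt{t}\,u$, numerator and denominator share the factor $e^{-\|X-y_X\|^2/(2t)}\,t^{n/2}$ and reduce to Gaussian integrals in $u$ against smooth functions of $\sqrt{t}\,u$. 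Splitting $X - y(z) = (X-y_X) + (y_X - y(z))$ in the numerator then isolates the leading piece: the first term yields $(X-y_X)$ times the denominator, while the second, expanded as $-y'(0)\sqrt{t}\,u + O(t)$, produces a tangential $O(\sqrt{t})$ vector whose integral against the symmetric Gaussian in $u$ vanishes to leading order (odd-moment cancellation, which persists through the Taylor expansion of $\hat{\rho}(z)|J(z)|$), leaving an $O(t)$ residual. Reinstating the factor $1/(1-e^{-t}) = (1+O(t))/t$ and dividing yields
\begin{equation*}
S(X,t) = \frac{X - y_X}{t} + O(1) = \frac{X - y_X}{t}\bigl(1 + o(1)\bigr),
\end{equation*}
the last equality using that $\|X-y_X\|>0$ since $X\notin\Omega$.

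The hardest part will be the bookkeeping of error terms: one must simultaneously track the Taylor expansions of the exponent $\|X-y(z)\|^2$, of the volume factor $\hat{\rho}(z)|J(z)|$, and of the numerator vector $X-y(z)e^{-t/2}$ to sufficiently high order so that, after the rescaling $z=\sqrt{t}\,u$, the vanishing of the tangential contribution persists through all cross corrections and the resulting residual is genuinely $o(\|X-y_X\|/t)$ in norm. A secondary concern is quantitatively dominating the far-field remainder outside $B_{\varepsilon_0}$ by the local Laplace scale $e^{-\|X-y_X\|^2/(2t)}t^{n/2}$; the lower bound $\hat{\rho}(z)|J(z)|\geq\rho_0$ from (H2) plays a role here by preventing the denominator from collapsing below this scale.
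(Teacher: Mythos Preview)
Your Laplace-method approach is correct and is in fact the argument the paper sketches in the Remark immediately following its proof. The paper's \emph{primary} proof, however, takes a more elementary route that sidesteps the Taylor-expansion bookkeeping you flag as the hardest part. Rather than rescaling $z=\sqrt{t}\,u$ and tracking odd-moment cancellations, the paper writes $S(X,t)=g(X,t)/(1-e^{-t})$ with $g$ the conditional mean of $X-X_0 e^{-t/2}$, and shows directly that $g(X,t)\to X-y_X$. The decomposition is at a fixed scale $\varepsilon>0$: the far piece over $\Omega\setminus B_\varepsilon$ is killed by the gap in the exponent exactly as you describe, while on $B_\varepsilon$ one uses only that $y(z)$ is uniformly close to $y_X$ together with the two-sided bound $\rho_0\le\hat\rho|J|\le\rho_1$ from (H2), so the weighted average of $y(z)$ lies within $O(\varepsilon\,\rho_1/\rho_0)$ of $y_X$ \emph{uniformly in $t$}. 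Sending $\varepsilon\downarrow 0$ after $t\to 0$ gives $\lim_{t\to 0} g(X,t)=X-y_X$, which is precisely the $o(1)$ in the statement.

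The tradeoff: the paper's argument runs verbatim under (H2), which assumes only continuity and pointwise bounds on $\hat\rho|J|$; it needs neither smoothness of the density nor your positive-definite-Hessian (non-focal) hypothesis, and no odd-moment cancellation ever enters. Your route, by contrast, yields the sharper $S(X,t)=(X-y_X)/t+O(1)$ rather than merely $(1+o(1))$, at the cost of those extra regularity and nondegeneracy assumptions, which are not part of (H1)--(H2) as stated. For the theorem as written the elementary $\varepsilon$-argument suffices and avoids your anticipated bookkeeping entirely.
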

We will leave the proof of this theorem and the following theorem in Section \ref{sec:proof}.
\begin{remark}
     The singularity of the score function demonstrated in Theorem \ref{Thm1} is in a strong, point-wise sense, rather than in a weak sense, for instance $L_2$ norms under distribution of forward process as discussed in \cite{pidstrigach2022scorebased}. This point-wise singularity may offer valuable insights for score modeling and approximations.
\end{remark}

In contrast to results in Theorem~\ref{Thm1}, there are situations where the target functions ($\epsilon$ and $S$) remain bounded as $t$ approaches zero for specific distributions $p_{data}$.
\begin{theorem}\label{thm2}(Regularity of the score function) Assuming the data distribution $X_0$ has the following form of probability density function,
\begin{align}
    \rho=\rho_0*\mu_1,
\end{align}
where $\rho_0$ can be {the PDF of any random variable\footnote{Technically, we assume the random variable ensures all related expectations are well-defined.}, denoted as $\hat{X}_0$,} and $\mu_1$ is the PDF of a normal distribution with variance $\sigma^2>0$. Then fixing $X$,
\begin{align}
    -\nabla_X\log p(X,t)=E_{{\hat{X}_0|X_t}}\Big[\frac{X_t-e^{-t/2}\hat{X}_0}{\sigma^2e^{-t}+1-e^{-t}}|X_t=X\Big].
\end{align}
The score function $-\nabla_X\log p(X,t)$ remains bounded when the support of $\rho_0$ is compact.
\end{theorem}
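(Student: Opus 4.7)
The plan is to reduce the regularity question to a Gaussian mixture whose effective covariance never collapses. Since $p_{data} = \rho_0 * \mu_1$, one may decompose $X_0 = \hat{X}_0 + \sigma Z_0$ with $\hat{X}_0 \sim \rho_0$ independent of $Z_0 \sim \mathcal{N}(0,I_d)$. Substituting into the closed-form representation $X_t = X_0 e^{-t/2} + W_{1-e^{-t}}$ established in Section~\ref{sec:background} yields
\begin{align}
X_t = e^{-t/2}\hat{X}_0 + e^{-t/2}\sigma Z_0 + W_{1-e^{-t}},
\end{align}
so that, conditional on $\hat{X}_0$, $X_t$ is Gaussian with mean $e^{-t/2}\hat{X}_0$ and covariance $(\sigma^2 e^{-t} + 1 - e^{-t})I_d$, since $Z_0$ is independent of $W_{1-e^{-t}}$.

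Next I would marginalize over $\hat{X}_0$ to obtain
\begin{align}
p(X,t) = \frac{1}{Z_{t,d}} \int \exp\!\Bigl(-\frac{\|X - e^{-t/2}\hat{X}_0\|^2}{2(\sigma^2 e^{-t} + 1 - e^{-t})}\Bigr) \rho_0(\hat{X}_0)\, d\hat{X}_0,
\end{align}
where $Z_{t,d}$ is the $X$-independent Gaussian normalization constant. Differentiating $\log p(X,t)$ in $X$ and identifying the resulting ratio of integrals as a conditional expectation against the posterior of $\hat{X}_0$ given $X_t=X$ then delivers the displayed formula. This mirrors the derivation leading to (\ref{global_S}), but with the denominator $1-e^{-t}$ replaced by $\sigma^2 e^{-t} + 1 - e^{-t}$.

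For the boundedness assertion, the decisive observation is that
\begin{align}
\sigma^2 e^{-t} + 1 - e^{-t} = 1 + (\sigma^2 - 1)e^{-t} \geq \min(\sigma^2,1) > 0 \quad \text{for all } t \geq 0,
\end{align}
so the singular $1/t$ factor that drove the blow-up in Theorem~\ref{Thm1} is simply absent: convolution with a nondegenerate Gaussian inflates any singular support of $\rho_0$ into a smooth, full-dimensional density, and the small-time asymptotics of the Green's function no longer concentrate against a lower-dimensional set. With $\rho_0$ compactly supported and $X$ fixed, the numerator $X - e^{-t/2}\hat{X}_0$ is uniformly bounded over $\hat{X}_0 \in \mathrm{supp}(\rho_0)$ and $t \in [0,T]$, so the conditional expectation is bounded. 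The only technical nuisance I anticipate is justifying the exchange of $\nabla_X$ with the $\hat{X}_0$-integral; this is routine by dominated convergence using the compact support of $\rho_0$. There is no real obstacle here—the argument is essentially a one-line consequence of the fact that Gaussian smoothing destroys the manifold singularity exploited in the proof of Theorem~\ref{Thm1}.
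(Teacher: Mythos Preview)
Your proposal is correct. Both you and the paper hinge on the same decomposition $X_0=\hat X_0+\sigma Z_0$ with $\hat X_0\sim\rho_0$ independent of $Z_0\sim\mathcal N(0,I_d)$, and both observe that the Gaussian noise $\sigma Z_0$ merges with the OU noise to produce an effective variance $\sigma^2e^{-t}+1-e^{-t}\ge\min(\sigma^2,1)>0$. The execution differs slightly: the paper starts from the already-established identity (\ref{global_S}), writes $E[e^{-t/2}X_0\mid X_t=X]$ via the tower property, and evaluates the inner conditional expectation $E[e^{-t/2}N_1\mid e^{-t/2}N_1+N_2=c]$ by the standard formula for the mean of one Gaussian given the sum of two independent Gaussians; this converts the denominator $1-e^{-t}$ in (\ref{global_S}) into $\sigma^2e^{-t}+1-e^{-t}$ by pure probabilistic manipulation. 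You instead bypass (\ref{global_S}) entirely, write the marginal $p(X,t)$ directly as a Gaussian mixture over $\hat X_0$, and differentiate. Your route is more self-contained and avoids the Gaussian-conditioning lemma; the paper's route is shorter because it recycles (\ref{global_S}). Either way the boundedness conclusion is immediate once the nondegenerate variance is in hand.
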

Theorem \ref{thm2} provides a possible explanation for why samples of the DDPM and SGM seem to be randomly perturbed away from the possible local support of the data distribution manifold. As discussed in Theorem \ref{Thm1}, the theoretical value of the target function becomes unbounded as $t$ approaches $0$, which is not expressible by most network configurations. The loss function that relies on the target function becomes unbounded too. The model turns out to learn a bounded function instead of a singular function, which corresponds to learning a polluted data distribution $\rho$ instead of $\rho_0$ ($p_{data}$). A sample from $\rho=\rho_0*\mu_1$ can be viewed as adding independent Gaussian noise to a sample from the original distribution $p_{data}$.

Summarizing the above, forcing the network, upper bounded by $\frac{1}{\sigma^2}$, to learn the model $S$ or $\epsilon$ from data supported on a low-dimensional geometry, turns out to add i.i.d. Gaussian noise in each dimension with variance $\sigma^2$ to the original data.
\begin{remark}
    On the other side Theorem \ref{thm2} provides a sufficient condition for uniform bounded score function. This may facilitate a well-approximated score function that ensures the convergences of the diffusion model.
\end{remark}
\subsection{A new model based on conditional expectation}\label{sec:newScore}
 To avoid such pollution,  we propose the conditional expectation model (CEM) to respect the singularities. Note that,
\begin{align}\label{S_decompose}
    S(X,t)=\frac{X}{1-e^{-t}}-\frac{e^{-t/2}}{1-e^{-t}}E_{{X_0|X_t}}[X_0|X_t=X].
\end{align}
Denoting $E_{{X_0|X_t}}[X_0|X_t=X]$ as $f(X,t)$, we know for fixed $t$ that $f(\cdot,t)$ minimizes the following functional,
\begin{align}
    J(f)= E_{X_0,X_t}(\|X_0-f(X_t,t)\|)^2.
\end{align}
This justifies defining a new loss function for training $f_\theta$ as
\begin{align}\label{newscore_loss}
    E_{X_0,X_t,t}\Big[\lambda(t) \big\|X_0-f_\theta(X_t,t)\big\|^2\Big],
\end{align}
where $\lambda(t)>0$ is a time-dependent weighing function that remains free for the user to choose.
\begin{remark} Similar objective functions as \eqref{newscore_loss} were mentioned in \cite{DDPM} very briefly, while we provide theoretical analysis (Section \ref{sec:singularities}) and numerical results (Section \ref{sec:comparison} for justification.
\end{remark}
    A good choice of $\lambda$ is to align the training process for each $t$. While the analytical value is inaccessible without knowledge of data distribution, in practice, we employ $\lambda(t)=(e^t-1)^{-1}$. This is inspired by an analysis of $L_{simple}$ in DDPM discussed as follows. {In DDPM, the target of the model reads,}
\begin{align}\label{eq:optimal_DDPM}
    \epsilon_\theta(X,t)=E_{{X_0|X_t}}[\frac{X_t-X_0e^{-t/2}}{\sqrt{1-e^{-t}}}|X_t=X].
\end{align} Fixing $t$, the loss function is lower bounded, i.e.,
\begin{align}\label{eq:lower_lambda_DDPM}
    E_{X_t}(\epsilon_\theta(X_t,t)-\frac{X_t-X_0e^{-t/2}}{\sqrt{1-e^{-t}}})^2
    \geq E_{X_t}Var[\frac{X_t-X_0e^{-t/2}}{\sqrt{1-e^{-t}}}|X_t],
\end{align}
where the equality holds when \eqref{eq:optimal_DDPM} holds. Note that
\begin{align}
    Var[\frac{X_t-X_0e^{-t/2}}{\sqrt{1-e^{-t}}}|X_t]=\frac{e^{-t}}{1-e^{-t}}Var(X_0|X_t).
\end{align}
The uniform weight in $L_{simple}$ implies a lower bound in the right-hand side of Eq.~\eqref{eq:lower_lambda_DDPM} is \textbf{assumed} to be independent of $t$. With the same assumption, we have
\begin{equation}
   E_{X_t}Var(X_0|X_t)\propto (e^{t}-1),
\end{equation}
which in turn gives $\lambda(t)=(e^t-1)^{-1}$.

\noindent\textbf{Sampling process.}
After training for $f_\theta$, the closed form solution of the backward process \eqref{backward_process} remains unknown. Thus, we still need to use numerical SDE solvers to construct a generative model of $p_{data}$. Using \eqref{S_decompose}, we consider the following replacement in the sampling scheme \eqref{global_sampling},
\begin{align}
    S_\theta(X,t) =  \frac{X}{1-e^{-t}}-\frac{e^{-t/2}}{1-e^{-t}}f_\theta(X,t).
\end{align}
Even after re-directing the network to model a bounded function $f$, the drift term in the backward process, $-\frac{1}{2}X-\nabla_X \log p(X,t)$, may still be of order $\mathcal{O}(\frac{1}{t})$ near $t=0$; see \eqref{Slimit}. The training schedule should be adapted accordingly.  A natural choice is to match the drift scale with a single time step. At the time $t_k$ for $k>1$, we consider the scale of changes due to the drift,
\begin{align}
    (t_{k}-t_{k-1})\frac{1}{t_{k}}:= \gamma_k.
\end{align}
Minimizing $\gamma_k$ for all $k>1$, we arrive at the following exponential schedule,
\begin{align}\label{exp_law}
t_k=t_1(1-\gamma)^{1-k}
\end{align}
where $\gamma=1-(\frac{T}{t_1})^{\frac{1}{K-1}}$.
\begin{remark}
    Though the scale of drift indicated in Eq.~\eqref{Slimit}, i.e., $\mathcal{O}(\frac{1}{1-e^{-t}})$, only applies when $t$ is near $0$. For $t\gg0$, we still use exponential schedule \eqref{exp_law} to reduce the time of network evaluation.
\end{remark}

{
\noindent\textbf{Case of general forward process} The unified framework in Section \ref{sec:framework} may be extended to the score-based generative models with general forward processes \eqref{score-based_forward}. More precisely, the marginal density follows,
\begin{align*}
    p(X,t)=\frac{1}{Z} \int K(X,X_0)p_{data}(X_0)dX_0,
\end{align*}
where $K(X,X_0)$ is the transition probability (Green's kernel) of the forward processes \eqref{score-based_forward}. The representation \eqref{global_S} turns to
\begin{align*}
    -\nabla_X\log p(X,t) = E_{{X_0|X_t}}[-\frac{\nabla_X K(X_t,X_0)}{K(X_t,X_0)}|X_t=X],
\end{align*}
and the loss function follows the misfit functional in the similar form as \eqref{global_training},
\begin{align*}
    J(S) = E_{X_0,X_t}\Big\|-\frac{\nabla_X K(X_t,X_0)}{K(X_t,X_0)}-S(X_t,t)\Big\|^2.
\end{align*}
In this regard, with tractable (e.g. with analytic expression) kernel $K$ and forward process $X_t|X_0$, one can formulate the training loss function as conditional expectation and backward process as \eqref{score-based_forward} with approximated $-\nabla_X \log p_t$. While the singularity theories, e.g. Theorem \ref{Thm1}, relies on small time asymptotic of heat/OU kernel \cite{evans2010partial} which is directly induced by the explicit formula, for instance derivation for \eqref{global_S}. Estimate of $\frac{\nabla_X K(X_t,X_0)}{K(X_t,X_0)}$ for general $K$ is non-trivial, we will leave it as a future direction.
}

\subsection{Proofs  }\label{sec:proof}\quad

\noindent\textbf{Proof of Theorem \ref{Thm1}}

 The score function has the following representation
\begin{align}\label{S_decompse}
    S(X,t)&=E_{{X_0|X_t}}[\frac{X_t-X_0e^{-\frac{t}{2}}}{1-e^{-t}}|X_t=X]=\frac{g(X,t)}{1-e^{-t}},
\end{align}
\begin{align}\label{G_decompse}
    g(X,t) &= E_{{X_0|X_t}}[X-X_0e^{-\frac{t}{2}}|X_t=X] = \frac{\int_{\Omega}(X-ye^{-\frac{t}{2}})e^{-\frac{\|X-ye^{-\frac{t}{2}}\|^2}{2(1-e^{-t})}}p_{data}(y)dy}{\int_{\Omega}e^{-\frac{\|X-ye^{-\frac{t}{2}}\|^2}{2(1-e^{-t})}}p_{data}(y)dy}
\end{align}
  With a fixed $\varepsilon>0$,  we decompose $g$ into two parts,
\begin{align}\label{G_decompse2}
    g(X,t)= \frac{\int_{B_{\varepsilon}}(X-ye^{-\frac{t}{2}})e^{-\frac{\|X-ye^{-\frac{t}{2}}\|^2}{2(1-e^{-t})}}p_{data}(y)dy}{\int_{\Omega}e^{-\frac{\|X-ye^{-\frac{t}{2}}\|^2}{2(1-e^{-t})}}p_{data}(y)dy} + \frac{\int_{\Omega\backslash B_{\varepsilon}}(X-ye^{-\frac{t}{2}})e^{-\frac{\|X-ye^{-\frac{t}{2}}\|^2}{2(1-e^{-t})}}p_{data}(y)dy}{\int_{\Omega}e^{-\frac{\|X-ye^{-\frac{t}{2}}\|^2}{2(1-e^{-t})}}p_{data}(y)dy}.
\end{align}
 By definition of $B_{\varepsilon}$, for $y\in\Omega\backslash B_{\varepsilon}$,
 \begin{align}
     \|X-ye^{-\frac{t}{2}}\|\geq e^{-\frac{t}{2}}\|X-y\|-(1-e^{-\frac{t}{2}})\|X\|\geq e^{-\frac{t}{2}}(\|X-y_X\|+\epsilon)-(1-e^{-\frac{t}{2}})\|X\|=:C_{t,\varepsilon}.
 \end{align}
 For $y\in B_{\varepsilon}$,
  \begin{align}
     \|X-ye^{-\frac{t}{2}}\|\leq e^{-\frac{t}{2}}\|X-y\|+(1-e^{-\frac{t}{2}})\|X\|\leq e^{-\frac{t}{2}}(\|X-y_X\|+\epsilon)+(1-e^{-\frac{t}{2}})\|X\|=:D_{t,\varepsilon}.\label{dis_est2}
 \end{align}

We claim that the second term of \eqref{G_decompse2} converges to zero as $t\to0$ (with fixed $\varepsilon$) since
\begin{align}
\Bigg\| \frac{\int_{\Omega\backslash B_{\varepsilon}}(X-ye^{-\frac{t}{2}})e^{-\frac{\|X-ye^{-\frac{t}{2}}\|^2}{2(1-e^{-t})}}p_{data}(y)dy}{\int_{\Omega}e^{-\frac{\|X-ye^{-\frac{t}{2}}\|^2}{2(1-e^{-t})}}p_{data}(y)dy}\Bigg \| \nonumber
\leq &  \frac{\int_{\Omega\backslash B_{\varepsilon}}(\|X\|+\|y\|)e^{-\frac{C_{t,\epsilon}^2}{2(1-e^{-t})}}p_{data}(y)dy}{\int_{\Omega}e^{-\frac{\|X-ye^{-\frac{t}{2}}\|^2}{2(1-e^{-t})}}p_{data}(y)dy} \\
\leq & \frac{\int_{\Omega}(\|X\|+\|y\|)p_{data}(y)dy}{\int_{\Omega}e^{-\frac{\|X-ye^{-\frac{t}{2}}\|^2-C^2_{t,\epsilon}}{2(1-e^{-t})}}p_{data}(y)dy}.
\end{align}
Given the boundedness of the expectation of the data distribution $p_{data}$, it remains to show the denominator converges to infinity as $t\to 0$. In fact, with \eqref{dis_est2} {and Subspace Assumption (H2)} in mind,
\begin{align}
    \int_{\Omega}e^{-\frac{\|X-ye^{-\frac{t}{2}}\|^2-C^2_{t,\epsilon}}{2(1-e^{-t})}}p_{data}(y)dy&\geq\int_{B_{\varepsilon'}}e^{-\frac{\|X-ye^{-\frac{t}{2}}\|^2-C^2_{t,\epsilon}}{2(1-e^{-t})}}p_{data}(y)dy\nonumber \\&\geq  \int_{y(z)\in B_{\varepsilon'}}e^{-\frac{D^2_{t,\varepsilon'}-C^2_{t,\varepsilon}}{2(1-e^{-t})}}\hat{\rho}(z)|J(z)|dz.\label{small_ball_bound}
\end{align}
With $t$ sufficient small, say $t<t_0$ such that $\frac{\varepsilon}{2}>2(e^{\frac{t_0}{2}}-1)\|X\|$, we set $\varepsilon'=\frac{\varepsilon}{2}-2(e^{\frac{t_0}{2}}-1)\|X\|>0$ so that
$\forall\ 0<t<t_0$,
\begin{align}
C_{t,\varepsilon}^2-D_{t,\varepsilon'}^2=&\Big(e^{-\frac{t}{2}}(\varepsilon-\varepsilon')-2(1-e^{-\frac{t}{2}})\|X\|\Big)e^{-\frac{t}{2}}(2\|X-y_X\|+\varepsilon+\varepsilon')\nonumber\\
=& \Bigg(e^{-\frac{t}{2}}\big(\frac{\varepsilon}{2}+2(e^{\frac{t_0}{2}}-1)\|X\|\big)-2(1-e^{-\frac{t}{2}})\|X\|\Bigg)e^{-\frac{t}{2}}(2\|X-y_X\|+\varepsilon+\varepsilon')\nonumber\\
=& \Big(e^{-\frac{t}{2}}\frac{\varepsilon}{2}+2(e^{\frac{t_0-t}{2}}-1)\|X\|\Big)e^{-\frac{t}{2}}(2\|X-y_X\|+\varepsilon+\varepsilon')\nonumber\\
\geq & e^{-t_0}\varepsilon \|X-y_X\|>0.
\end{align}
The right-hand side of \eqref{small_ball_bound} converges to infinity as $t\to 0$.

Similarly,
\begin{align}
&\frac{\int_{\Omega\backslash B_{\varepsilon}}e^{-\frac{\|X-ye^{-\frac{t}{2}}\|^2}{2(1-e^{-t})}}p_{data}(y)dy}{\int_{B_{\varepsilon}}e^{-\frac{\|X-ye^{-\frac{t}{2}}\|^2}{2(1-e^{-t})}}p_{data}(y)dy}
\leq  \frac{\int_{\Omega\backslash B_{\varepsilon}}e^{-\frac{\|X-ye^{-\frac{t}{2}}\|^2}{2(1-e^{-t})}}p_{data}(y)dy}{\int_{B_{\varepsilon'}}e^{-\frac{\|X-ye^{-\frac{t}{2}}\|^2}{2(1-e^{-t})}}p_{data}(y)dy}\nonumber\\
\leq & \frac{\int_{\Omega\backslash B_{\varepsilon}}e^{-\frac{C_{t,\varepsilon}^2}{2(1-e^{-t})}}p_{data}(y)dy}{\int_{B_{\varepsilon'}}e^{-\frac{D_{t,\varepsilon'}^2}{2(1-e^{-t})}}p_{data}(y)dy}
\leq   \frac{1}{\int_{y(z)\in B_{\varepsilon'}}e^{-\frac{D_{t,\varepsilon'}^2-C_{t,\varepsilon}^2}{2(1-e^{-t})}}\hat{\rho}(z)|J(z)|dz}= o(t).
\end{align}
So the denominator in the first term of \eqref{G_decompse2} can also be decomposed and approximated by the contribution in $B_{\varepsilon}$
\begin{align}
    \int_{\Omega}e^{-\frac{\|X-ye^{-\frac{t}{2}}\|^2}{2(1-e^{-t})}}p_{data}(y)dy= (1+o(t))\int_{B_{\varepsilon}}e^{-\frac{\|X-ye^{-\frac{t}{2}}\|^2}{2(1-e^{-t})}}p_{data}(y)dy.
\end{align}
Then when $t\to0$, we have in local coordinates \eqref{localpdata},
\begin{align}
    &\frac{\int_{B_{\varepsilon}}(X-ye^{-\frac{t}{2}})e^{-\frac{\|X-ye^{-\frac{t}{2}}\|^2}{2(1-e^{-t})}}p_{data}(y)dy}{\int_{B_{\varepsilon}}e^{-\frac{\|X-ye^{-\frac{t}{2}}\|^2}{2(1-e^{-t})}}p_{data}(y)dy} \nonumber\\
    =&\frac{\int_{y(z)\in B_{\varepsilon}}(X-y(z)e^{-\frac{t}{2}})e^{-\frac{\|X-y(z)e^{-\frac{t}{2}}\|^2}{2(1-e^{-t})}}\hat{\rho}(z)|J(z)|dz}{\int_{y(z)\in B_{\varepsilon}}e^{-\frac{\|X-y(z)e^{-\frac{t}{2}}\|^2}{2(1-e^{-t})}}\hat{\rho}(z)|J(z)|dz}\nonumber\\
    =&X-e^{-\frac{t}{2}}\frac{\int_{y(z)\in B_{\varepsilon}}y(z)e^{-\frac{\|X-y(z)e^{-\frac{t}{2}}\|^2}{2(1-e^{-t})}}\hat{\rho}(z)|J(z)|dz}{\int_{y(z)\in B_{\varepsilon}}e^{-\frac{\|X-y(z)e^{-\frac{t}{2}}\|^2}{2(1-e^{-t})}}\hat{\rho}(z)|J(z)|dz}.\label{localchartexp}
\end{align}
Taking \eqref{localboundc} into account, and realizing that $y(z)e^{-\frac t2}$ is well approximated by $y_X$ on $B_\eps$ for $t$ small,
\begin{align}
   &\Bigg\|\frac{\int_{y(z)\in B_{\varepsilon}}y(z)e^{-\frac{\|X-y(z)e^{-\frac{t}{2}}\|^2}{2(1-e^{-t})}}\hat{\rho}_t(z)|J(z)|dz}{\int_{y(z)\in B_{\varepsilon}}e^{-\frac{\|X-y(z)e^{-\frac{t}{2}}\|^2}{2(1-e^{-t})}}\hat{\rho}_t(z)|J(z)|dz}-y_X\Bigg\|\nonumber\\
   \leq&  \frac{\rho_1\int_{y(z)\in B_{\varepsilon}}\|y(z)-y_X\|e^{-\frac{\|X-y(z)e^{-\frac{t}{2}}\|^2}{2(1-e^{-t})}}dz}{\rho_0\int_{y(z)\in B_{\varepsilon}}e^{-\frac{\|X-y(z)e^{-\frac{t}{2}}\|^2}{2(1-e^{-t})}}dz}\nonumber\\
   \leq& \frac{\rho_1\int_{y(z)\in B_{\varepsilon}}\varepsilon e^{-\frac{\|X-y(z)e^{-\frac{t}{2}}\|^2}{2(1-e^{-t})}}dz}{\rho_0 \int_{y(z)\in B_{\varepsilon}}e^{-\frac{\|X-y(z)e^{-\frac{t}{2}}\|^2}{2(1-e^{-t})}}dz}\nonumber\\
   \leq & \frac{\varepsilon \rho_1}{ \rho_0}.\label{diff_to_yt}
\end{align}

Substituting back to \eqref{localchartexp} then \eqref{G_decompse} we have
\begin{align}\label{limg}
    \lim_{t\to 0}g(X,t)=X-y_X+\mathcal{O(\varepsilon)}.
\end{align}
Since choice of $\varepsilon>0$ is arbitrary, from \eqref{S_decompse} we have,
\begin{align}\label{S_limit2}
    \lim_{t\to 0} tS(X,t)=\lim_{t\to 0}\frac{ t(X-y_X)}{1-e^{-t}} =X-y_X.
\end{align}
\begin{remark}
    The above derivation may be generalized as an application of the Laplace method, which we now briefly present.  The manifold $\Omega$ is covered by charts mapping subsets to domains of Euclidean space. Consider one such chart parameterized by variables $y=y(z)$ with $z\in U\subset {\mathbb R}^n$ and Jacobian of the transformation equals $1$ to simplify. We assume that the closest point $y_X=y_X(z_X)$ for $z_X\in U$.

The Laplace method (see, e.g., \cite{erdelyi1956asymptotic})  states that an integral of the form
   \begin{align}\label{laplace_assumption}
       \int_{U} e^{-\frac1{t} \theta(z)} h(z)dz
   \end{align}is approximated by,
\begin{align}
(2\pi t)^\frac n2 \frac{h(z_0)}{|H\theta(z_0)|^{\frac12}}e^{-\frac1{t} \theta(z_0)} (1+o(1))
\end{align}
where $z_0$ is the unique point minimizing $\theta(z)$ and $|H\theta(z_0)|$ is the determinant of the positive definite Hessian of $\theta$ at $z_0$.

We start with,
    \begin{align}
   S(X,t)= \frac{\int_{U}(X-y(z)e^{-\frac{t}{2}})e^{-\frac{\|X-y(z)e^{-\frac{t}{2}}\|^2}{2(1-e^{-t})}}\hat{\rho}(z)dz}{(1-e^{-t})\int_{U}e^{-\frac{\|X-y(z)e^{-\frac{t}{2}}\|^2}{2(1-e^{-t})}}\hat{\rho}(z)dz}.
    \end{align}
    By noticing $1-e^{-t}\approx t$ when $t\to 0$ and applying the Laplace method \eqref{laplace_assumption} to the nominator and denominator with $\theta(z)=\|X-y(z)e^{-\frac{t}{2}}\|^2/2$ and $h(z)=(X-y(z)e^{-\frac{t}{2}})\hat{\rho}(z)$ and $h(z)=\hat{\rho}(z)$ correspondingly, we immediately arrive at,
    \begin{align}
    S(X,t) = \frac{X-y_X}t (1+o(1)).
    \end{align}
\end{remark}

\begin{remark}
    The uniqueness assumptions of Theorem \ref{Thm1} hold almost surely during the backward process. This is due to during the backward process, the target function (e.g. $S$ and $\epsilon$) is evaluated on an approximated distribution of the forward process, which is globally supported over $\mathbb{R}^d$ and hence almost surely outside of $\Omega$. As for the subspace assumption, it is a widely shared belief that data distribution in, e.g., human genes, climate patterns, and images, are supported on low dimensional structures \cite{tenenbaum2000global, roweis2000nonlinear, belkin2003laplacian}. We would like to further remark that the dimension $n$ in the subspace assumption is only locally defined and our result holds as long as $n<d$.
\end{remark}

\noindent\textbf{Proof of Theorem \ref{thm2}.}

Let $N_1$ and $N_2$ be independent standard normal distributions with variance $\sigma^2$ and $1-e^{-t}$ correspondingly. Denoting $\hat{X}_0$ as a random variable with distribution $\rho_0$ and $X_0$ is also a random variable with distribution $\rho$. Notice that $\rho=\rho_0*\mu_1$, where $\mu_1$ is PDF of $N_1$. Therefore, we know that $X_0=\hat{X}_0+N_1$. Moreover, the solution of forward process \eqref{forward_process} is $X_t=X_0e^{-\frac{t}{2}}+\sqrt{1-e^{-t}}N$, where $N$ is a standard normal random variable. Subsequently, the equation $X_t=(\hat{X}_0+N_1e^{-\frac{t}{2}})+N_2$ is hold in distribution sense. Using this relation, we can derive

\begin{align}
    &E[e^{-t/2}X_0|X_t=X]\nonumber\\
    =&E[e^{-t/2}(\hat{X}_0+N_1)|e^{-t/2}(\hat{X}_0+N_1)+N_2=X]\nonumber\\
    =&E[e^{-t/2}\hat{X}_0|e^{-t/2}(\hat{X}_0+N_1)+N_2=X]+E[E[e^{-t/2}N_1|e^{-t/2}N_1+N_2=X-e^{-t/2}\hat{X}_0,\hat{X}_0]]\nonumber\\
    =&E[e^{-t/2}\hat{X}_0|e^{-t/2}(\hat{X}_0+N_1)+N_2=X]+E[\frac{\sigma^2e^{-t}(X-e^{-t/2}\hat{X}_0)}{\sigma^2e^{-t}+1-e^{-t}}|e^{-t/2}(\hat{X}_0+N_1)+N_2=X]\nonumber\\
    =&E[\frac{\sigma^2e^{-t}X+(1-e^{-t})e^{-t/2}\hat{X}_0)}{\sigma^2e^{-t}+1-e^{-t}}|e^{-t/2}(\hat{X}_0+N_1)+N_2=X].
\end{align}
So
\begin{align}
    -\nabla_X\log p(X,t)&=E_{{X_0|X_t}}[\frac{X_t-X_0 e^{-t/2}}{1-e^{-t}}|X_t=X]\nonumber\\
    &=E_{{\hat{X}_0|X_t}}[\frac{X_t-e^{-t/2}\hat{X}_0}{\sigma^2e^{-t}+1-e^{-t}}|X_t=X],
\end{align}
which remain bounded when the support of $\rho_0$ is compact.

\section{Experiments}
The experiments consist of five parts. First, we employ DDPM, SGM, and the proposed CEM to learn a one-dimensional supported distribution in $\mathbb{R}^2$.
By comparing the learned models with the corresponding analytic values, we show the new model outperforms DDPM and SGM by avoiding approximating singularities.
Second, we also verify that if we replace the network with its corresponding analytic expression, the sampling process gives the exact distribution. Third, we investigate the performance of the new model depending on some parameters that are decided empirically. Fourth, we apply CEM to a high-dimensional example, i.e., the MNIST dataset, and compare the performance with DDPM. Lastly, we conduct ablation studies  for the sampling schedule and the weighting function.

In all subsequent experiments, unless otherwise specified, we setup the diffusion model with $T = 10$ as the `final' time and $K=200$ uniform/non-uniform time grid points (exponential schedule \eqref{exp_law}) for training/sampling. For model training, we use $10^6$ samples with a batch size of $10^4$, and we choose \textbf{Adam} as the optimizer, where the learning rate is 0.001. The network configuration will be specified in each example.
\subsection{Comparison between SGM, DDPM and proposed CEM}\label{sec:comparison}
In the following, we compare the SGM, DDPM, and CEM on several two-dimensional target distributions.

\noindent\textbf{Line normal distribution in 2d space.} As the first example, we consider a distribution supported on a line in two-dimensional space. Precisely, the data distribution is generated by $X=(X_1,X_2)$, where $X_1\sim\mathcal{N}(0,1)$, $X_2=0$. In Appendix \ref{eg:lowDmanifold}, we derive the explicit formulas:
\begin{equation}\label{eqn:analytic_1dline}
    \begin{cases}
        S(X,t)=(X_1, \frac{X_2}{1-e^{-t}}),\\
        \epsilon(X,t)=(\sqrt{1-e^{-t}}X_1, \frac{X_2}{\sqrt{1-e^{-t}}}),\\
        f(X,t)=(X_1e^{-\frac{t}{2}}, 0).
    \end{cases}
\end{equation}


\begin{figure}[htbp]
\begin{center}
\includegraphics[bb=20 0 500 150, width=0.6\columnwidth]{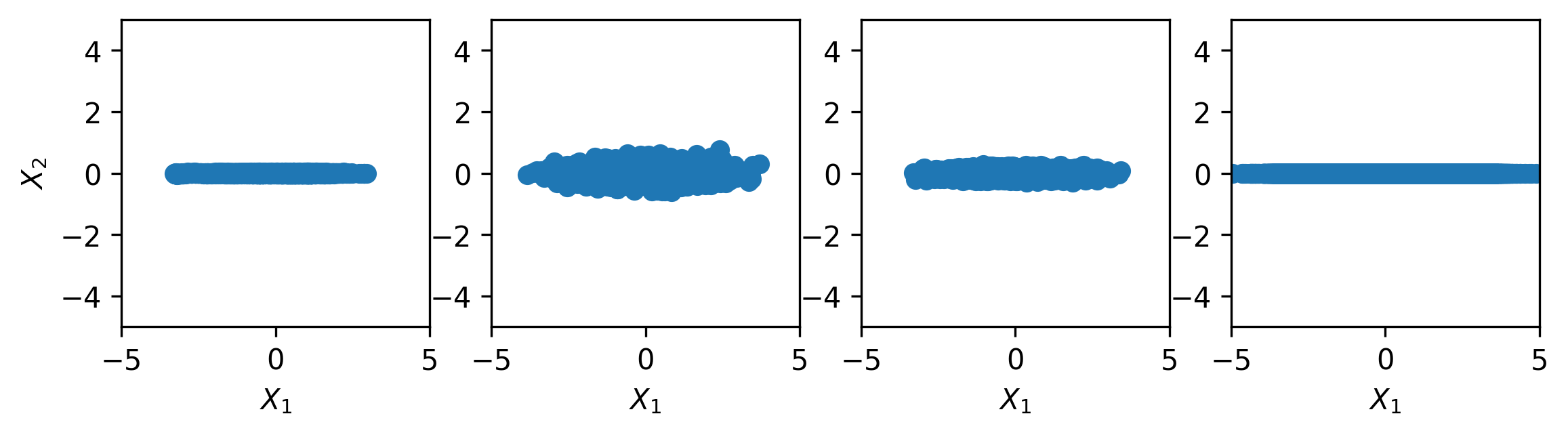}
\end{center}
\caption{1d line normal distribution in 2d space. From left to right: CEM, SGM, DDPM, and the ground truth. The network configuration is as follows: 2 hidden layers, each layer with 16 nodes, and \textbf{Tanh} as the activation function.}
\label{fig:line_normal}
\end{figure}
In Figure \ref{fig:line_normal}, we compare the distributions generated after training the CEM, SGM, and DDPM.  Our method CEM displays less pollution errors compared to DDPM and SGM. To verify that the errors originate from the poor approximation of the goal functions, we compare in Figure \ref{fig:error_line_normal_point} the estimated score function $S_{\theta}(X,t)$ with the ground truth $S(X,t)$ at a fixed point $X_{eva}=(1,-0.1)$, i.e., $e(S(X_{eva},t), S_{\theta}(X_{eva},t))=\|S(X_{eva},t)-S_{\theta}(X_{eva},t)\|$.
We similarly define $e(\epsilon(X_{eva},t),\epsilon_{\theta}(X_{eva},t))$ and $e(f(X_{eva},t),f_{\theta}(X_{eva},t))$. Notice that $X_{eva}$ is outside of the support of the distribution $\mathbb{R}\times\{0\}$ and that by \eqref{eqn:analytic_1dline}, the target score functions $S$ and $\epsilon$ exhibit singularities in the second coordinate. Correspondingly, in the left of Figure \ref{fig:error_line_normal_point}, we observe that the approximations of $f$, $S$, and $\epsilon$ are roughly correct for the first coordinate. This also verifies that the training of SGM and DDPM is indeed modeling the conditional expectation suggested in Eq.~\eqref{global_S}. On the right picture of Figure \ref{fig:error_line_normal_point}, we observe that, due to the existence of singularities, the approximations of $S$ and $\epsilon$ are incorrect and the error grows rapidly in the final steps of the sampling procedure.


\begin{figure}[htbp]
\begin{center}
\includegraphics[width=0.6\columnwidth]{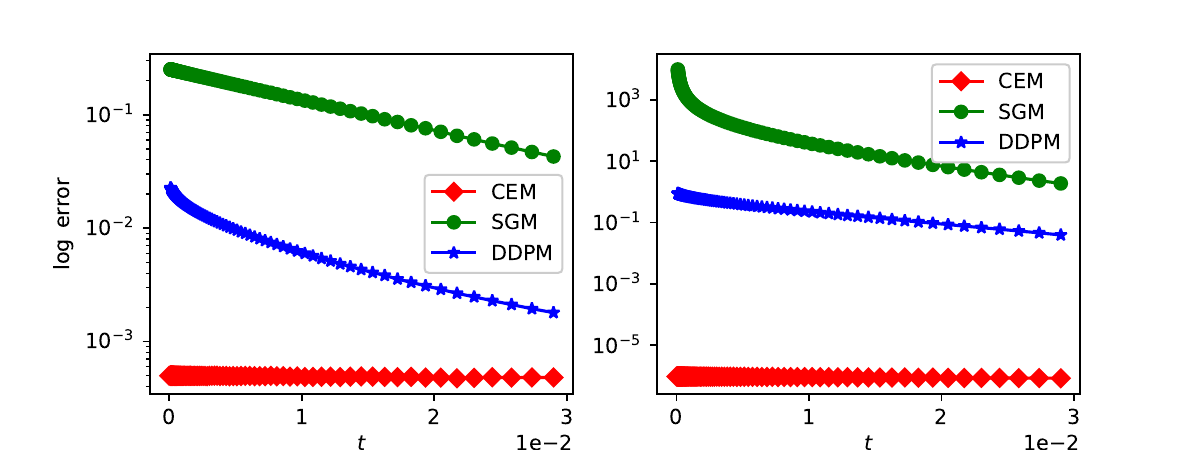}
\end{center}
\caption{Error at a fixed point $X_{eva}=(1,-0.1)$. Red, proposed CEM: $e(f,f_{\theta})$; Green, SGM: $e(S,S_{\theta})$; Blue, DDPM: $e(\epsilon,\epsilon_{\theta})$.  (Left) first component of estimated function. (Right) second component of estimated function.}
\label{fig:error_line_normal_point}
\end{figure}

Figure \ref{fig:error_line_normal} displays the $L^2$-error between the analytic formulas in \eqref{eqn:analytic_1dline} and the estimated functions $f$, $S$ and $\epsilon$ obtained during the last $100$ sampling steps in the backward process. The $L^2$ norm is defined as follows. With $S_{\theta}(X,t)$ the estimated score function, the $L^2$-error at a fixed time $t>0$ is defined by
\begin{align}
e_p(S,S_{\theta})=\int \|S(X,t)-S_{\theta}(X,t)\|^2p(X,t)dX,
\end{align} where $p(X,t)$ is distribution of the forward process \eqref{forward_process}. In practice, we solve the forward process \eqref{forward_process} to obtain the empirical distribution at time $t$ as an approximation of distribution $p$. We similarly evaluate $e_p(\epsilon,\epsilon_{\theta})$ and $e_p(f,f_{\theta})$.


\begin{figure}[htbp]
\begin{center}
\includegraphics[ width=0.6\columnwidth]{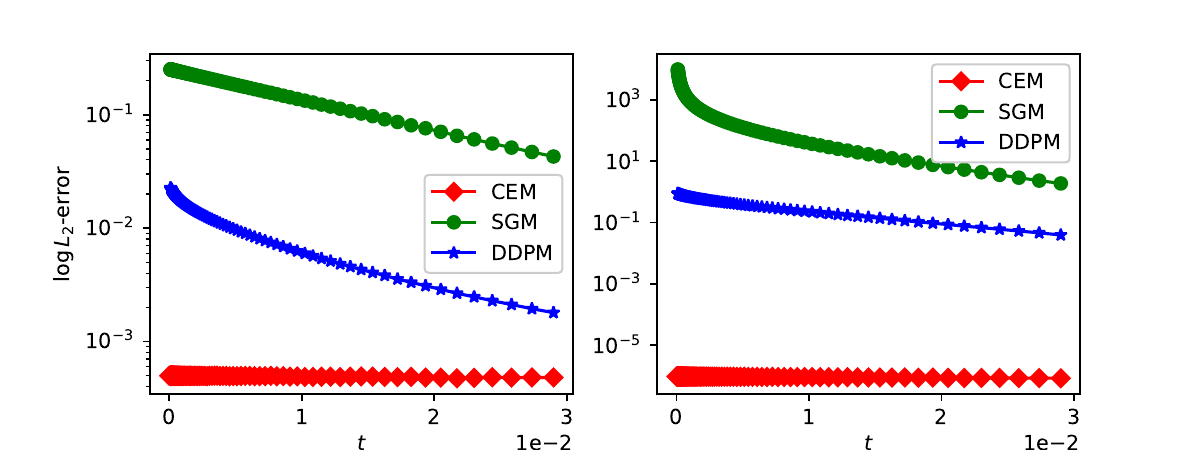}
\end{center}
\caption{$L^2$-error with distribution $p$. Red, proposed CEM: $e_p(f,f_{\theta})$; Green, SGM: $e_p(S,S_{\theta})$; Blue, DDPM: $e_p(\epsilon,\epsilon_{\theta})$. (Left) the first component of the model function. (Right) the second component of the model function.}
\label{fig:error_line_normal}
\end{figure}
Since $\epsilon=\sqrt{1-e^{-t}}S$ is of order $\mathcal{O}(\frac{1}{\sqrt{t}})$, we remark that with same configuration of network, $\epsilon$ in DDPM is better approximated than $S$ in SGM (see Figure \ref{fig:error_line_normal_point} and Figure \ref{fig:error_line_normal}). This results in less pollution in the sampling process, as shown in Figure \ref{fig:line_normal}.

\noindent\textbf{Curve distribution.} We now consider distributions with more complex geometries, and in particular a data distribution generated by  $X=(Ucos(U),Usin(U))$ where $U\sim\text{Unif}[1,13]$. In Figure \ref{fig:curve}, we compare the distributions generated by the CEM, SGM, and DDPM. The singularities near $t=0$ exhibited in Theorem \ref{Thm1} imply that errors only accumulate during the final few stages of the sampling process. The approximated stochastic dynamics primarily lead $X_t$ to a local neighborhood of the support of $p_{data}$, where most of the error is concentrated. {Furthermore, Theorem \ref{thm2} shows, that if the target data is blurred by a white noise, the score function, as well as   $\epsilon$ in DDPM, is upper bounded. So forcing network models to approximate unbounded target functions ( $\epsilon$ in DDPM and $S$ in SGM) may lead to learning a blurred version of the original distribution as shown in Figure \ref{fig:curve}.}


\begin{figure}[htbp]
\begin{center}
\includegraphics[bb=20 0 600 200, width=0.6\columnwidth]{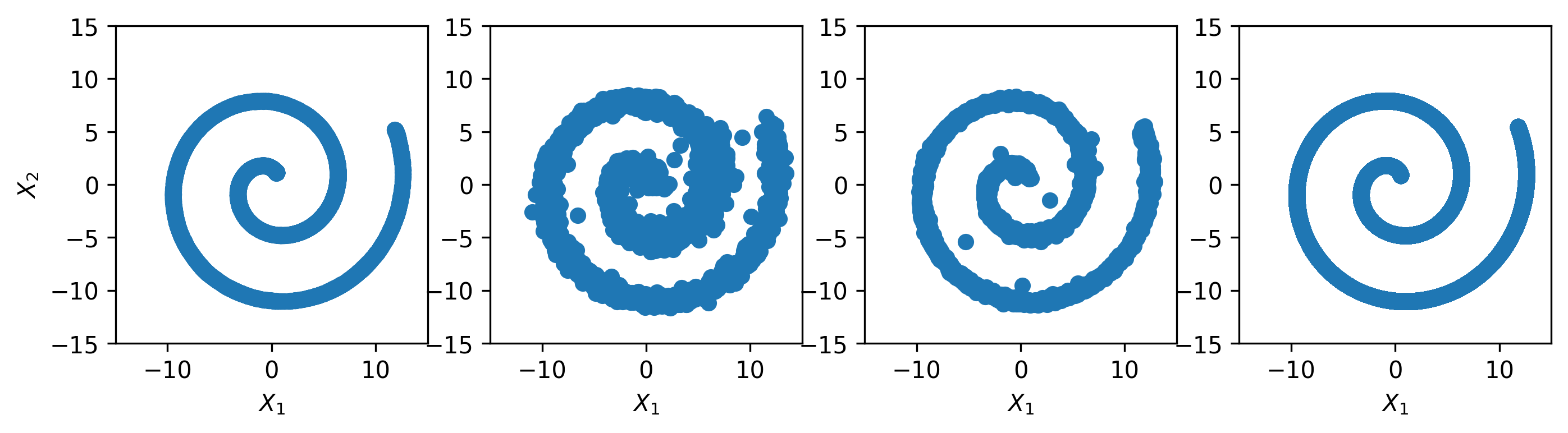}
\end{center}
\caption{Curve distribution. From left to right: the proposed CEM, SGM, DDPM, and the ground truth. The network configuration is as follows: 3 hidden layers, each layer with 64 nodes, and \textbf{Tanh} as the activation function.}
\label{fig:curve}
\end{figure}

\subsection{Replacing the network by analytical expressions} \label{replace_analytical}

In a limited number of favorable settings, the diffusion coefficients $(v,D)$ that appear in the backward sampling process may be computed explicitly leading to an equally explicit expression for the conditional expectation \eqref{global_S}. This bypasses the need to model $(v,D)$ by a neural network.

As an illustrative example, we generate $5$ points (randomly) $X_{1:5}$ and set the target data distribution $p_{data}=\sum_{i=1}^5 \delta_{X_i}$. We then obtain the following analytical expression derived in Appendix \ref{Appendix:pointcloud},
\begin{align}\label{pointcloud_expression}
E_{{X_0|X_t}}[X_0|X_t=X] =\frac{\sum_{i=1}^{5}X_{0}^{(i)}\exp{\Big(-\frac{\|X-X_{0}^{(i)}e^{-\frac{t}{2}}\|^2}{2(1-e^{-t})}\Big)}}{\sum_{i=1}^{5}\exp{\Big(-\frac{\|X-X_{0}^{(i)}e^{-\frac{t}{2}}\|^2}{2(1-e^{-t})}\Big)}}.
\end{align}
Figure \ref{fig:fivePoints_scatter} displays the backward process for 10000 samples generated by solving the backward SDE at the times $t=10,5.8718,3.2356,0.7518,0.0216,0$. Not surprisingly, the initial points sampled from a normal distribution are entirely ``absorbed" into the target five-point distribution at the final sampling step $t=0$. Table \ref{table:fivePoints_freq} counts the empirical frequencies (probability) of absorption by the five target points, which are very close to their theoretical value $0.2$.

The interpretation is then twofold. (1) With an exact model of the target function in the training process and an exact solution of the SDE \eqref{backward_process} in the sampling process, the resulting new samples accurately reproduce the original training data. This validates that the training process of the diffusion model under the framework discussed in Section \ref{sec:framework} is in fact a least square minimization that achieves optimality with conditional expectation \eqref{global_S}. (2) When explicit expressions such as \eqref{pointcloud_expression} are not available, this ideal accurate sampling of the training data can rarely be achieved in practice due to the imperfections in the neural network approximation. Only a simplified distribution is learned in practice, which enables the generalization abilities. See Figure \ref{fig:curveDeepShallow} and the next section for the reconstructions obtained in the context of a distribution for different neural nets trained to approximate $E_{{X_0|X_t}}[X_0|X_t=X]$. We point out here that there is a similar numerical experiment in \cite{peluchetti2022nondenoising}.


\begin{figure}[htbp]
\begin{center}
\includegraphics[bb=20 0 500 300, width=0.6\columnwidth]{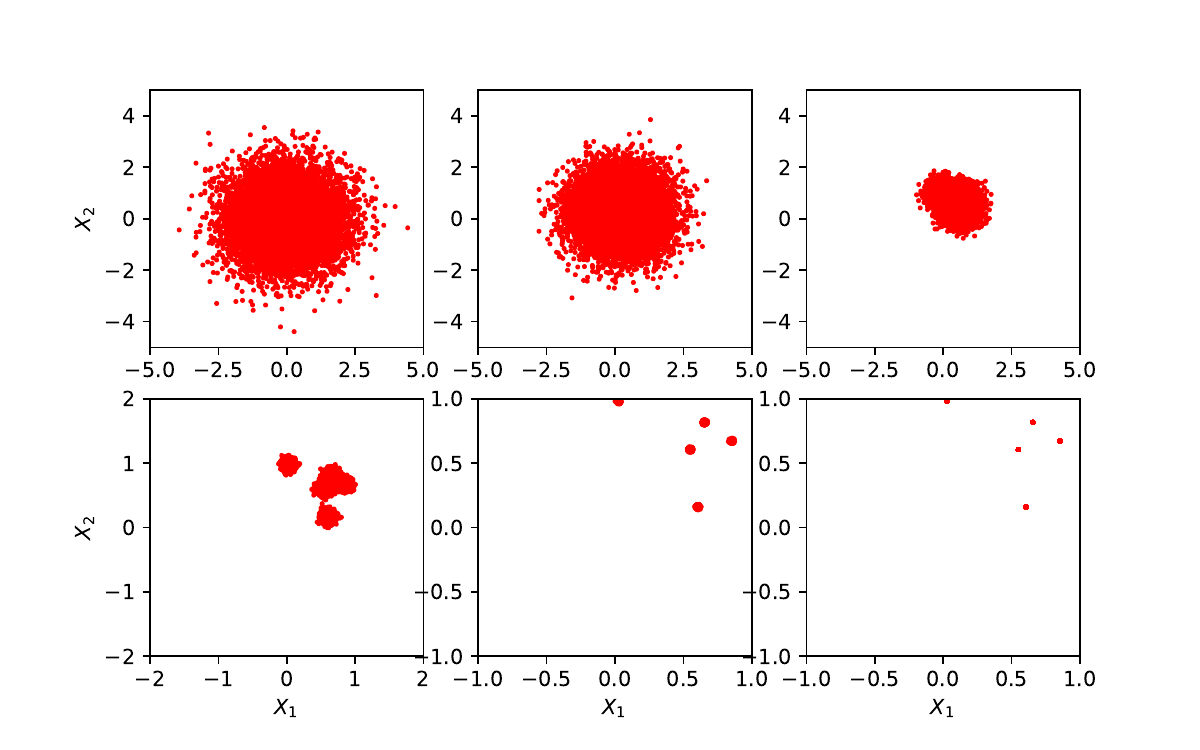}
\end{center}
\caption{Generating five-point distribution in 2d space by the analytic expression of the drift, scattering plot of sampling process for $t=10,5.8718,3.2356,0.7518,0.0216,0$.}
\label{fig:fivePoints_scatter}
\end{figure}


\begin{table}[htbp]
\caption{Sampling five-point distribution in 2d space with analytic expression. Frequency of each point.}
\label{table:fivePoints_freq}
\begin{center}
\begin{tabular}{llllll}
\multicolumn{1}{c} {POINTS}  &\multicolumn{1}{c} 1  &\multicolumn{1}{c} 2  &\multicolumn{1}{c} 3  &\multicolumn{1}{c} 4  &\multicolumn{1}{c} 5
\\ \hline \\
FREQ.      &0.2086 &0.1924 &0.2092 &0.1977 &0.1921
\end{tabular}
\end{center}
\end{table}

\subsection{Dependence on the model configuration}\label{sec:network-config}
In the following examples, we discuss the performance of the CEM under various types of configurations.

\noindent\textbf{Expressive power of network.} %
{We consider sampling an eight-point distribution in $\mathbb{R}^2$ under the new loss \eqref{newscore_loss} with  different network configurations. In Figure \ref{fig:curveDeepShallow}, we present the samples from a shallow neural network (1 hidden layer, 4 nodes each layer), a deep neural network (3 hidden layers, 64 nodes each layer), and training data (ground truth). This comparison illustrates that when the expressive power of the network is limited (through limiting number of latent variables), then CEM will generate a different distribution with lower complexity that may relate to generalization. }

{ We point out that, the above example in Figure \ref{fig:curveDeepShallow} may emulate the practical application scenario that models the score through a structured but with fewer latent variables network (e.g. the well-established U-Net in \cite{ronneberger2015u}) to generate high dimensional distributions (e.g. pictures). The new target function $f$ in CEM may have different structures compared to $\epsilon$ in DDPM and $S$ in SGM.  This may require a design of the network that differs from conventional used ones in the applications.}
Moreover, if the design of the network preserves the possible low dimensional structure of continuous data distribution, instead of the discrete samples, solving the backward process associated with the network modeled drift may \textit{generalize} the distribution of the discrete sample to the continuous one. We leave this as future work.


\begin{figure}[htbp]
\begin{center}
\includegraphics[bb=20 0 500 200, width=0.7\columnwidth]{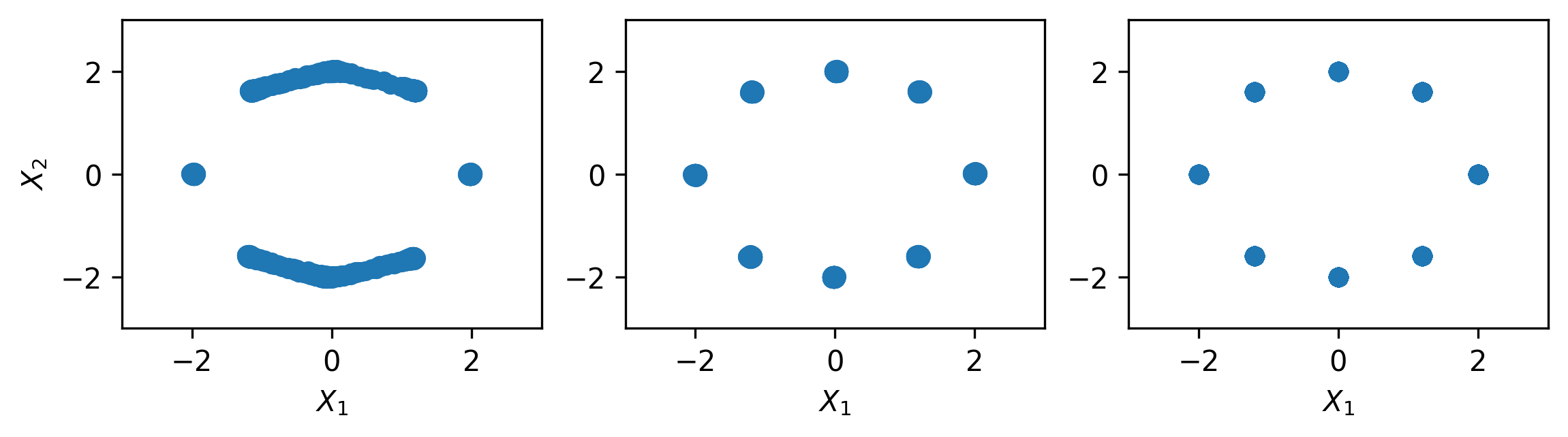}
\end{center}
\caption{{Eight-point distribution.}  From left to right: {Shallow} neural network, {deep} neural network, and the ground truth. The deep network configuration: 3 hidden layers, each layer with 64 nodes. The shallow network configuration: 1 hidden layer, each layer with {4 nodes.} \textbf{Tanh} as the activation function of both.}
\label{fig:curveDeepShallow}
\end{figure}
\noindent{\textbf{Sampling schedule $t_1$.}}
Yet another parameter to be determined in the sampling schedule proposed in Eq.~\eqref{exp_law} is $t_1$.
In Figure \ref{fig:point_cloud}, we  consider a $20$ points distribution in $\mathbb{R}^2$ and generate samples from Eq.~\eqref{global_sampling} with analytical expression for various values of $t_1$. As a splitting scheme, \eqref{global_sampling} introduces numerical errors proportional to the time step. Since the final time step is $t_1$, we can see in Figure \ref{fig:point_cloud} that smaller $t_1$ results in fewer errors in the generated distribution. In practice, we do not recommend $t_1$ to be taken as too small as this introduces numerical instabilities when computing the final drift in \eqref{global_sampling}.


\begin{figure}[htbp]
\begin{center}
\includegraphics[ width=0.8\columnwidth]{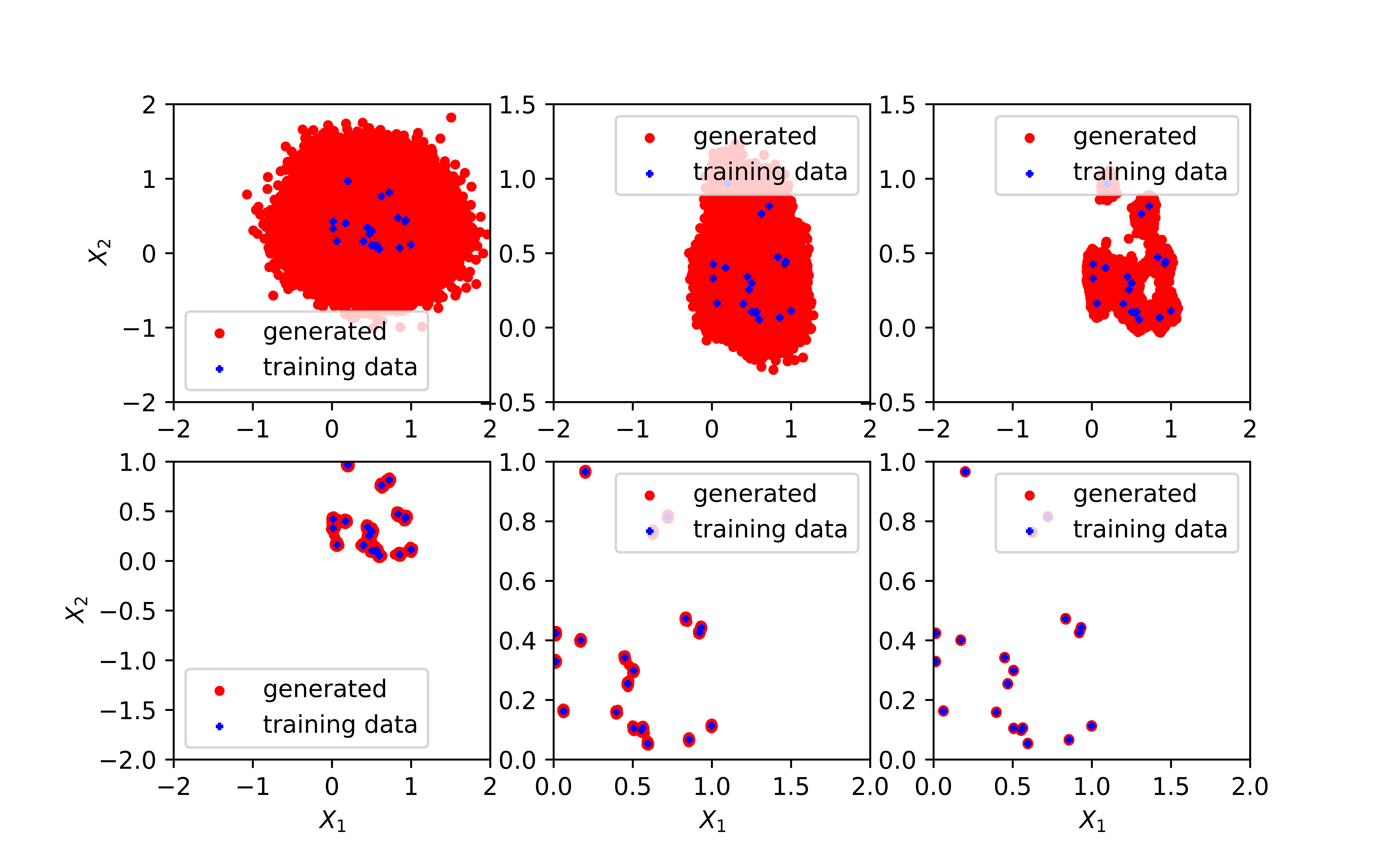}
\end{center}
\caption{Generated samples with the analytic expression of drift for various $t_1$}
    \label{fig:point_cloud}
\end{figure}
\noindent\textbf{Aligning the training process by designing $\lambda$.}
In order to further improve the  effectiveness of training, it is also important to control the variance of the loss function at different times by judicious choices of $\lambda(t)$. In section \ref{sec:newScore}, we propose to define $\lambda(t)=\frac{1}{e^t-1}$, and the previous experiments have verified its validity.
Recalling the loss function in CEM \eqref{newscore_loss}, we ensure that
\begin{align}\label{variance_fitting}
    \lambda(t)^{-1} \sim E_{X_0,X_t}\|X_0-f(X_t,t)\|^2.
\end{align}
For a given explicit expression of $f$, we can numerically estimate the right-hand side at different times by sampling the forward process $X_t$. The estimation is denoted as $\lambda_{\text{true}}(t)$, as it reflects the potential small $t$ asymptotic regime of the variance in the right-hand side of \eqref{variance_fitting}.

In Figure \ref{fig:lambda_fitted}, we revisit the case of  Section \ref{sec:comparison} with a curve target distribution $p_{data}$. The proposed $\lambda$ in Section \ref{sec:newScore} is $(e^t-1)^{-1}$. We consider $\frac{1}{\lambda_{\text{guess}}(t)}=C(e^t-1)$ with a free constant $C$ to fit the computed $\frac{1}{\lambda_{\text{true}}(t)}$. It can be seen in Figure \ref{fig:lambda_fitted} that $(e^t-1)^{-1}$ captures the correct scale despite the minor perturbations introduced by the sampling. This result is another confirmation that our proposed method CEM may greatly improve training stability in some cases.

%

%
\begin{figure}[htbp]
\begin{center}
\includegraphics[bb=0 0 200 150, width=0.3\columnwidth]{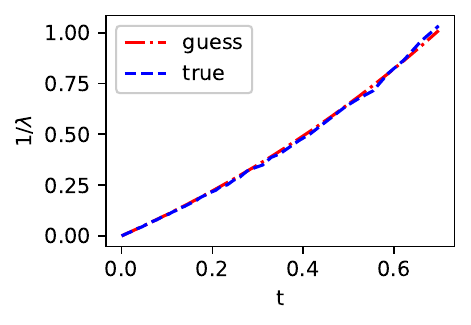}
\end{center}
\caption{Fitting $\lambda^{-1}$. True: $E_{X_0,X_t}\|X_0-f(X_t,t)\|^2$ by Monte Carlo; guess: $C(e^t-1)$ for the best constant $C$.}
\label{fig:lambda_fitted}
\end{figure}

\subsection{Application to MNIST}
In this subsection, we present the performance when applying our CEM to generate high dimensional distribution (MNIST, \ref{fig:true_MNIST}). Comparing with previous examples, we replace the densely connected net by Unet \cite{ronneberger2015u} to model  $\epsilon$ \eqref{eq:optimal_DDPM} of DDPM and $f$ of CEM separately. We apply \textbf{Adam} optimizer with a learning rate of 0.00002 and train each model with a batch size of 64 for 30 epochs. Both the forward process and the sampling process consist of 1000 steps, with a final time $T=10$.
Figure \ref{fig:CEM_MNIST_realTime} and Figure \ref{fig:DDPM_MNIST_realTime} show that the generation of CEM and DDPM correspondingly. {The FID scores of CEM and DDPM are 56.55 and 63.64, respectively.}
Limited by computing resources, this preliminary numerical result validates the potential sample generation capability of CEM for high dimensional distributions and shows the advantage of CEM over the original DDPM. {We agree that the performance of CEM in other large data sets may not be comparable with DDPM, see also discussion in Section \ref{sec:network-config}. We will leave the correct/fine-tuned network configuration of CEM as a future direction.}


\begin{figure}[htbp]
  \centering
  \begin{subfigure}[b]{0.32\textwidth}
    \includegraphics[bb=0 0 1300 1000, width=\columnwidth]{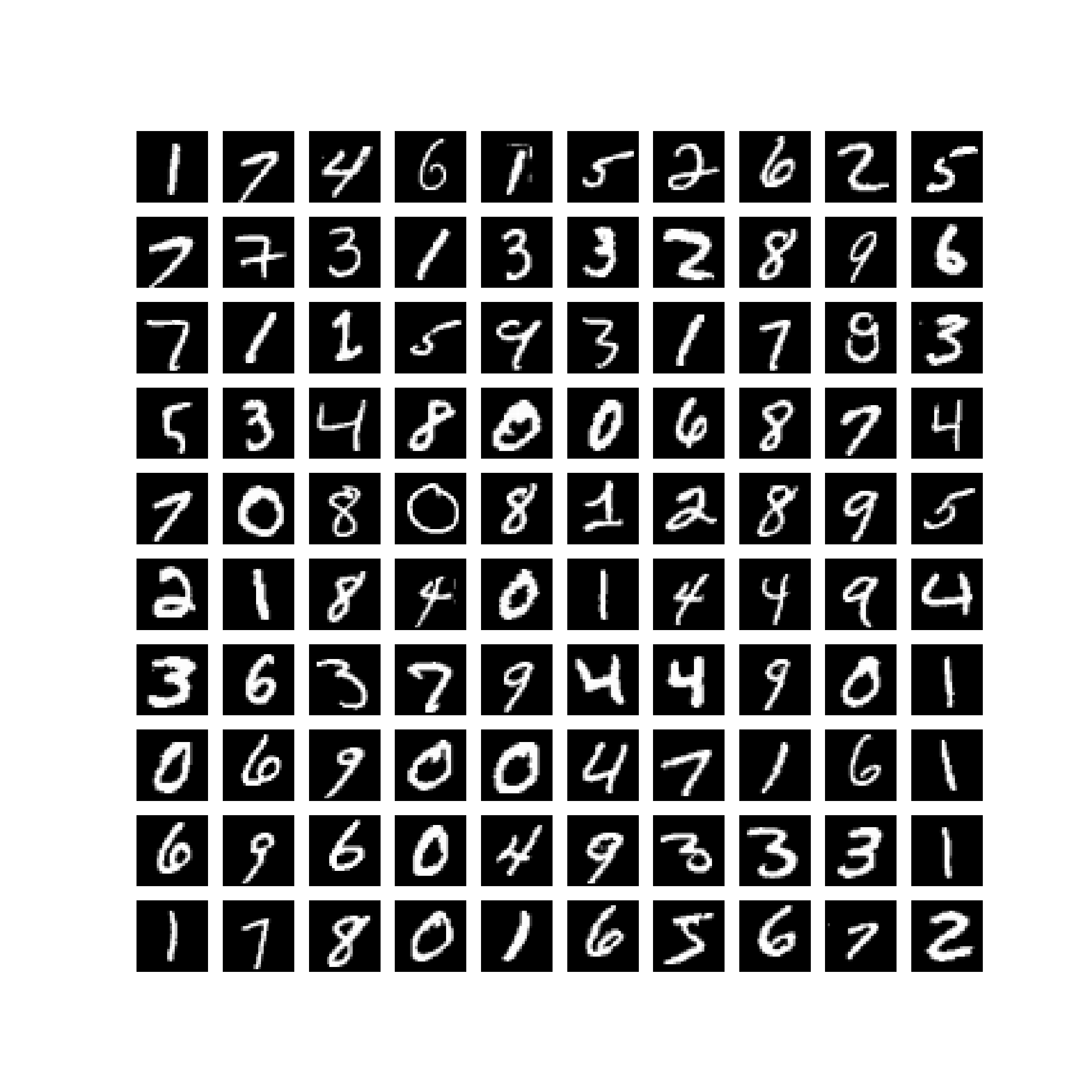}
    \caption{Snapshots of MNIST}
    \label{fig:true_MNIST}
  \end{subfigure}
  \hfill
  \begin{subfigure}[b]{0.32\textwidth}
    \includegraphics[bb=0 0 1300 1000, width=\columnwidth]{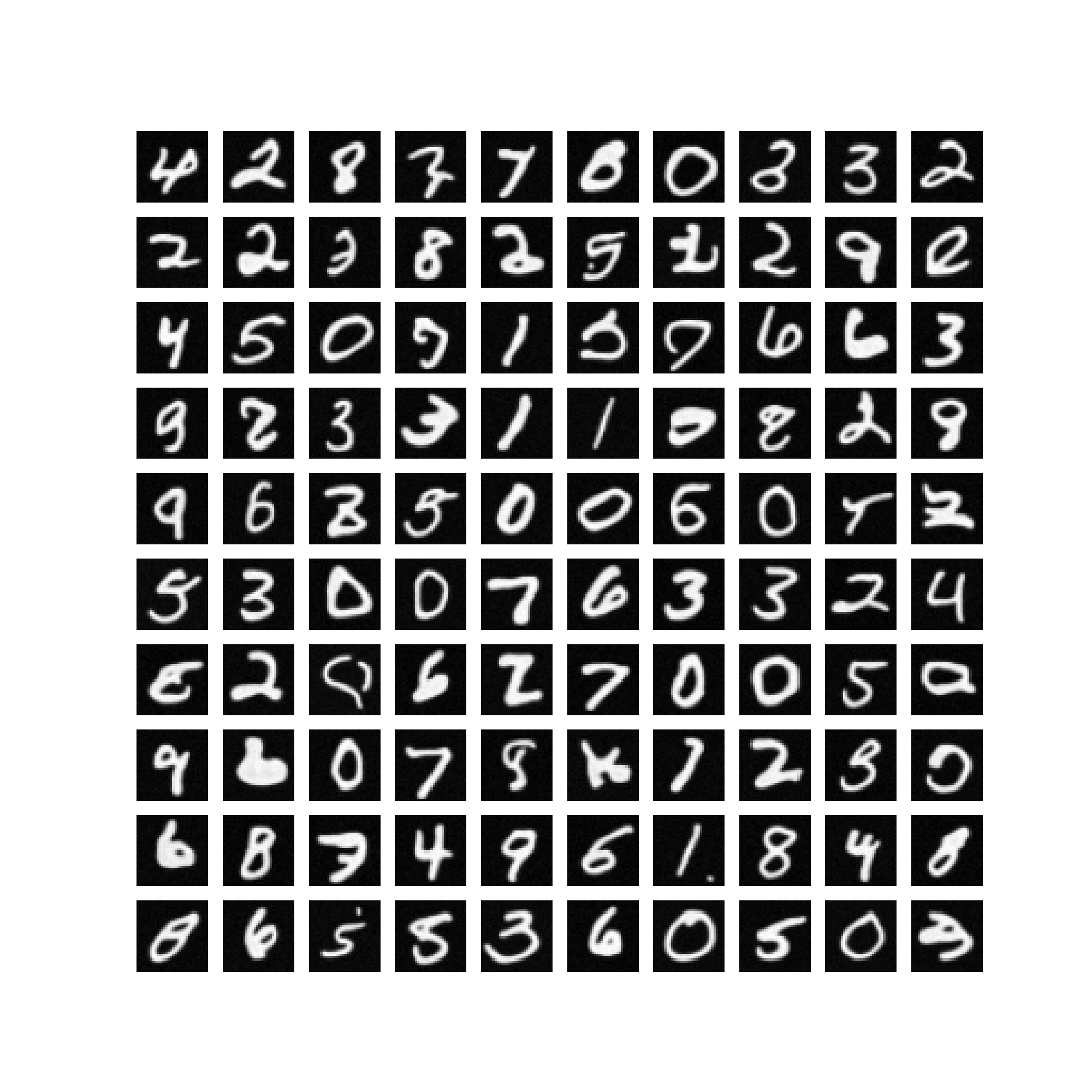}
    \caption{CEM, {FID: 56.55}}
    \label{fig:CEM_MNIST_realTime}
  \end{subfigure}
  \hfill
  \begin{subfigure}[b]{0.32\textwidth}
    \includegraphics[bb=0 0 1300 1000, width=\columnwidth]{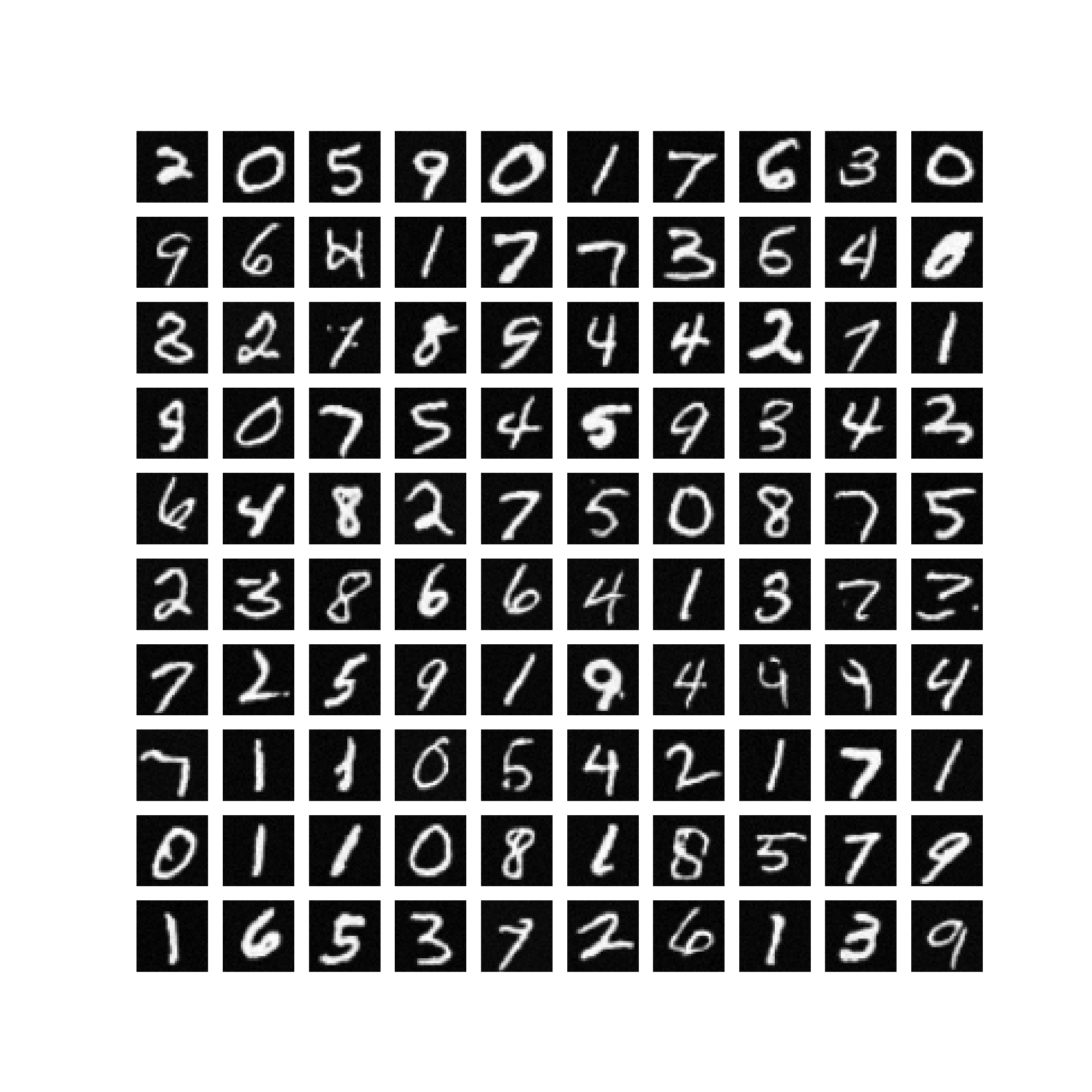}
    \caption{DDPM, {FID:63.64}}
    \label{fig:DDPM_MNIST_realTime}
  \end{subfigure}
  \caption{Performance of CEM and DDPM on MNIST}
  \label{fig:MNIST_realTime}
\end{figure}

\subsection{Ablation Studies}\quad

\noindent\textbf{Impact of sampling schedule.} Theorem \ref{Thm1} shows the general existence of the singularities during the sampling process. An arbitrary sampling schedule may lead to numerical instabilities during solving reverse-time SDEs. To this end, we take $50$ time steps from $T=10$ for the sampling process and compare the linear schedule, quadratic schedule, and the proposed exponential schedule \eqref{exp_law} in Figure \ref{fig:comparison_schedules}. As expected, we can see that the exponential schedule significantly improves the sampling performance of CEM as a result of respecting the growth of the scale of the drift. As an intermediate between linear and exponential, the quadratic schedule yields similar results to the exponential schedule, but with slightly inferior performance.


\begin{figure}[htbp]
\begin{center}
\includegraphics[bb=50 0 600 200, width=0.7\columnwidth]{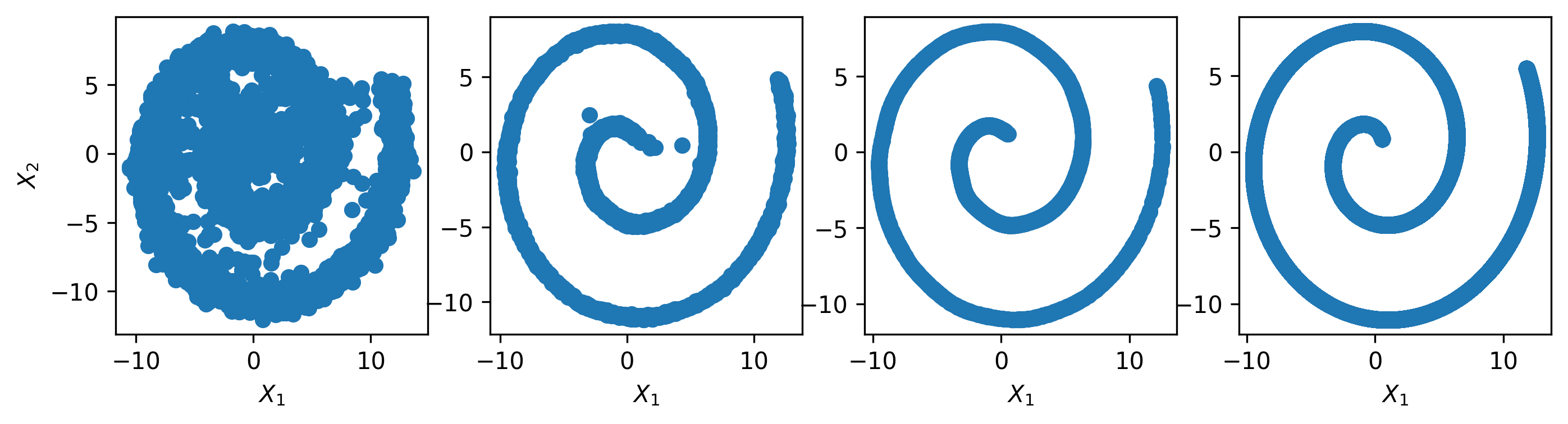}
\end{center}
\caption{Comparison of different sampling schedules. From left to right: linear schedule, quadratic schedule, exponential schedule and ground truth. The network configuration is as follows: 2 hidden layers, each layer with 32 nodes, and \textbf{Tanh} as the activation function.}
\label{fig:comparison_schedules}
\end{figure}

\noindent\textbf{Impact of weighting function $\lambda$.} The weighting function $\lambda$ in \eqref{newscore_loss} is also a major impact factor for the performance and should be carefully designed for the training in order to normalize the training objective. We choose three different weighting functions $\lambda(t)=1,\frac{1}{(e^t-1)^2},\frac{1}{e^t-1}$ and compare their sampling performance in Figure \ref{fig:comparison_weight_func}. We can see  that the proposed weighting function $\lambda(t)=\frac{1}{e^t-1}$ achieves a better sampling result than the other two functions.


\begin{figure}[htbp]
\begin{center}
\includegraphics[bb=50 0 600 200, width=0.7\columnwidth]{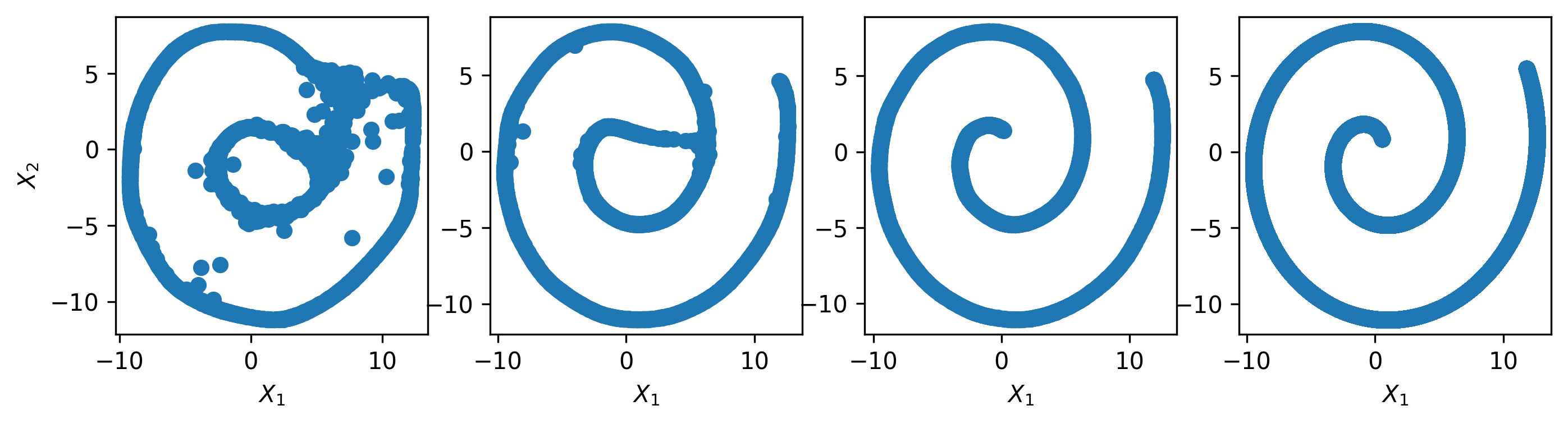}
\end{center}
\caption{Comparison of different weighting functions. From left to right: constant weighting function, $\frac{1}{(e^t-1)^2}$, $\frac{1}{e^t-1}$ and ground truth. The network configuration is as follows: 2 hidden layers, each layer with 32 nodes, and \textbf{Tanh} as the activation function.}
\label{fig:comparison_weight_func}
\end{figure}

\bibliographystyle{plain}

\appendix
\section{Appendix}
\noindent{All codes are available: \url{https://github.com/Yubin-Lu/CEM_Diffusion_Model}}
{
\subsection{Interpretation of training loss}\label{app:training-goal}
Here we provide the justification of \eqref{training-goal}, derivation of goal of DDPM and CEM are similar. We consider a general integrable function $f:\mathbb{R}^d\times \mathbb{R}^d \to \mathbb{R}^d$. $\forall X\in\mathbb{R}^d$ and fixing $X_t=X$. Now as the projection property of conditional expectation, $g^*=E_{X_0|X_t=X}f(X_t,X_0)$ (later it will be a function of $X$) is the minimizer of the function $J'(g)=E_{X_0|X_t=X}\|f(X_t,X_0)-g\|^2$.
Clearly, $\forall X$, the density function of $X_0$ reads $\frac{ p_{X_0,X_t}(X_0,X)}{\int p_{X_0,X_t}(X_0',X)dX_0'}$ and hence,
\begin{align}
    J'(g(X)) =\int \|f(X,X_0)-g(X)\|^2 \frac{ p_{X_0,X_t}(X_0,X)}{\int p_{X_0,X_t}(X_0',X)dX_0'} dX_0.
\end{align}
Note $\int p_{X_0,X_t}(X_0',X)dX_0'$ is a positive function of $X$, then $\forall g:\mathbb{R}^d \to \mathbb{R}^d$,
\begin{align}
 \int J'(g(X)) \int p_{X_0,X_t}(X_0',X)dX_0' dX \geq \int J'(g^*(X)) \int p_{X_0,X_t}(X_0',X)dX_0' dX.
\end{align}
The last one implies, by viewing $g^*$ as a function $\mathbb{R}^d \to \mathbb{R}^d$, $g^*$ minimizes the following functional,
\begin{align}
    &\int J'(g(X)) \int p_{X_0,X_t}(X_0',X)dX_0' dX \nonumber\\
    =& \int \|f(X,X_0)-g(X)\|^2 { p_{X_0,X_t}(X_0,X)} dX_0dX,
\end{align}
which leads to Eq.~\eqref{training-goal}.
}
 \subsection{Normal distribution} \label{Appendix:NormalDist}

Consider a one-dimensional case. If $X_0$ is a normal distribution $\mathcal{N}(\mu,\sigma^2)$,  the marginal density \eqref{marginal_density} becomes
\begin{align}\label{marginal_density_non_standard_normal}
    p(t,X) &= \frac{1}{Z\sqrt{2\pi}\sigma}\int{\rm exp}\Big(-\frac{\|X-X_0e^{-\frac{t}{2}}\|^2}{2(1-e^{-t})}\Big){\rm exp}\big(-\frac{\|X_{0}-\mu\|^{2}}{2\sigma^2}\big)dX_0\nonumber\\
    &=\frac{1}{Z\sqrt{2\pi}\sigma}\int e^{-L(t,X,X_0)}dX_0.
\end{align}
Here the function $L(t,X,X_0)$ is denoted by:
\begin{align}
    L(t,X,X_0)&:=\frac{\|X-X_0e^{-\frac{t}{2}}\|^2}{2(1-e^{-t})}+\frac{\|X_0-\mu\|^2}{2\sigma^2} \nonumber\\
    &=\frac{\sigma^2\|X-X_0e^{-\frac{t}{2}}\|^2+\|X_0-\mu\|^2(1-e^{-t})}{2\sigma^2(1-e^{-t})} \nonumber\\
    &=\frac{A(t)\|X_0\|^2-B(t,X)X_0+\sigma^2\|X\|^2+\mu^2(1-e^{-t})}{2\sigma^2(1-e^{-t})} \nonumber\\
    &=\frac{\sigma^2\|X\|^2+\mu^2(1-e^{-t})}{2\sigma^2(1-e^{-t})}+\frac{A\|X_0-\frac{B}{2A}\|^2-\frac{B^2}{4A}}{2\sigma^2(1-e^{-t})} \nonumber\\
    &=\frac{\sigma^2\|X\|^2+\mu^2(1-e^{-t})}{2\sigma^2(1-e^{-t})}-\frac{B^2}{8A\sigma^2(1-e^{-t})}+\frac{A\|X_0-\frac{B}{2A}\|^2}{2\sigma^2(1-e^{-t})},
\end{align}
where the function $A(t)=\sigma^2e^{-t}+1-e^{-t}$ and $B(t,X)=2X\sigma^2e^{-\frac{t}{2}}+2\mu(1-e^{-t})$. Therefore, we can rewrite the marginal density \eqref{marginal_density_non_standard_normal} as
\begin{align}
    p(t,X)&=\frac{1}{Z\sqrt{2\pi}\sigma}\exp{\big( \frac{B^2}{8A\sigma^2(1-e^{-t})}-\frac{\sigma^2\|X\|^2+\mu^2(1-e^{-t})}{2\sigma^2(1-e^{-t})}\big)}\int\exp{\big(- \frac{A\|X_0-\frac{B}{2A}\|^2}{2\sigma^2(1-e^{-t})}\big)}dX_0 \nonumber\\
    &=\frac{1}{\sqrt{2\pi A}}\exp{\big( \frac{B^2}{8A\sigma^2(1-e^{-t})}-\frac{\sigma^2\|X\|^2+\mu^2(1-e^{-t})}{2\sigma^2(1-e^{-t})}\big)}\frac{\sqrt{A}}{Z\sigma}\int\exp{\big(-\frac{A\|X_0-\frac{B}{2A}\|^2}{2\sigma^2(1-e^{-t})}\big)}dX_0 \nonumber\\
    &=\frac{1}{\sqrt{2\pi A}} \exp{\big( \frac{B^2}{8A\sigma^2(1-e^{-t})}-\frac{\sigma^2\|X\|^2+\mu^2(1-e^{-t})}{2\sigma^2(1-e^{-t})}\big)}.
\end{align}
Subsequently, we have
\begin{equation}\label{log_p_nnrmal}
    \log p(t,X) = \log(\frac{1}{\sqrt{2\pi A}})+\frac{B^2}{8A\sigma^2(1-e^{-t})}-\frac{\sigma^2\|X\|^2+\mu^2(1-e^{-t})}{2\sigma^2(1-e^{-t})}.
\end{equation}
Substituting $A(t)=\sigma^2e^{-t}+1-e^{-t}$ and $B(t,X)=2X\sigma^2e^{-\frac{t}{2}}+2\mu(1-e^{-t})$ into \eqref{log_p_nnrmal} yields
\begin{align}\label{grad_log_p}
    \nabla_X\log p(t,X)&=\frac{B\nabla_X B}{4A\sigma^2(1-e^{-t})}-\frac{X}{1-e^{-t}}\nonumber\\
    &=\frac{4\sigma^4e^{-t}X+4\mu\sigma^2e^{-\frac{t}{2}}(1-e^{-t})}{4\sigma^2(1-e^{-t})(\sigma^2e^{-t}+1-e^{-t})}-\frac{X}{1-e^{-t}}\nonumber\\
    &=\big[\frac{\sigma^2e^{-t}}{(1-e^{-t})(\sigma^2e^{-t}+1-e^{-t})}-\frac{1}{1-e^{-t}}\big]X+\frac{\mu e^{-\frac{t}{2}}}{\sigma^2e^{-t}+1-e^{-t}}\nonumber\\
    &=\frac{-X+\mu e^{-\frac{t}{2}}}{\sigma^2e^{-t}+(1-e^{-t})}.
\end{align}
Thus, the function $\nabla_X\log p(t,X)$ is not singular at $t=0$. This agrees with $\nabla_X\log p(t,X)=-X$ when $\mu=0$ and $\sigma=1$.\\

\subsection{Distribution supported on a low dimensional manifold}\label{eg:lowDmanifold}
 If $X_0$ is a normal distribution $\mathcal{N}(0,1)$ and $Y_0$ is a $\delta_{0}$-distribution,  the marginal density in \eqref{marginal_density} becomes
\begin{align}
    p(t,X,Y)&=\frac{1}{Z}\int\int \exp{-\frac{ (X-X_0e^{-\frac{t}{2}})^2 + (Y-Y_0e^{-\frac{t}{2}})^2}{2(1-e^{-t})}}\rho(X_0,Y_0)dX_0dY_0\nonumber\\
    &=\frac{1}{Z}\int\int \exp{-\frac{ (X-X_0e^{-\frac{t}{2}})^2 + (Y-Y_0e^{-\frac{t}{2}})^2}{2(1-e^{-t})}}\rho(X_0)\delta_0(Y_0)dX_0dY_0\nonumber\\
    &=\frac{1}{Z\sqrt{2\pi}}\int \exp{-\frac{ (X-X_0e^{-\frac{t}{2}})^2 + Y^2}{2(1-e^{-t})}}e^{-\frac{X_0^2}{2}}dX_0\nonumber\\
    &=\frac{1}{Z\sqrt{2\pi}}\int \exp{ -\frac{ (X_0-Xe^{-\frac{t}{2}})^2+X^2+Y^2-X^2e^{-t} }{2(1-e^{-t})} }dX_0\nonumber\\
    &=\frac{\sqrt{1-e^{-t}}}{Z}\exp{\frac{X^2e^{-t}-X^2-Y^2}{2(1-e^{-t})}}.
\end{align}
Therefore,
\begin{equation}
    \log p(t,X,Y) = \log(\frac{\sqrt{1-e^{-t}}}{Z})+\frac{X^2e^{-t}-X^2-Y^2}{2(1-e^{-t})},
\end{equation}
and
\begin{align}
    \nabla_X\log p(t,X,Y) &= -X, \\
    \nabla_Y\log p(t,X,Y) &= -\frac{Y}{1-e^{-t}}.
\end{align}

\subsection{Point cloud distribution}\label{Appendix:pointcloud}
We now derive the analytic expression for $E_{{X_0|X_t}}[X_0|X_t=X]$ when $X_0$ is drawn from a point cloud. Suppose that the number of points is $K$ (denoted by $\{X_0^{(i)}\}_{i=1}^{K}$), then
\begin{align}\label{App:Ex0}
    E_{{X_0|X_t}}[X_0|X_t=X]=\frac{\frac{1}{Z} \int X_0\exp\Big(-\frac{\|X-X_0 e^{-t/2}\|^2}{2(1-e^{-t})}\Big)p_{data}(X_0)dX_0,}{\frac{1}{Z} \int\exp\Big(-\frac{\|X-X_0 e^{-t/2}\|^2}{2(1-e^{-t})}\Big)p_{data}(X_0)dX_0}
    =\frac{\sum_{i=1}^{K}X_{0}^{(i)}\exp{\Big(-\frac{\|X-X_{0}^{(i)}e^{-\frac{t}{2}}\|^2}{2(1-e^{-t})}\Big)}}{\sum_{i=1}^{K}\exp{\Big(-\frac{\|X-X_{0}^{(i)}e^{-\frac{t}{2}}\|^2}{2(1-e^{-t})}\Big)}},
\end{align}
where $Z$ is a normalizing factor that depends on $t$ and the dimension of $X_0$.

\end{document}